\crefname{hypothesis}{Hypothesis}{Hypotheses}
\crefname{fact}{Fact}{Facts}
\title{BlinDNO: A Distributional Neural Operator for Dynamical System Reconstruction from Time-Label-Free data\thanks{Submitted to the editors DATE.
\funding{This work was supported by the National Natural Science Foundation of China (NSFC) 92370125 and New Cornerstone Investigator Program 100001127}}}
\author{Zhijun Zeng\thanks{Department of Mathematical Sciences, Tsinghua University, Beijing, China 
  (\email{zengzj22@mails.tsinghua.edu.cn}).}
\and Junqing Chen\thanks{Department of Mathematical Sciences, Tsinghua University, Beijing, China  
  (\email{jqchen@tsinghua.edu.cn}).}
\and Zuoqiang Shi\thanks{Yau Mathematical Sciences Center, Tsinghua University, Beijing, China and Yanqi Lake Beijing Institute of Mathematical Sciences and Applications, Beijing, China  
  (\email{zqshi@tsinghua.edu.cn}).Corresponding author.}}
\begin{document}

\maketitle

\begin{abstract}
We study an inverse problem for stochastic and quantum dynamical systems in a time-label-free setting, where only unordered density snapshots—sampled at unknown times drawn from an observation-time distribution $\nu_t$—are available. These observations induce a distribution over state densities, from which we seek to recover the parameters of the underlying evolution operator. We formulate this as learning a distribution-to-function neural operator and propose \textsc{BlinDNO}, a permutation-invariant architecture that integrates a multiscale U-Net encoder with attention-based mixer.
Numerical experiments on a wide range of stochastic and quantum systems—including a 3D protein-folding mechanism reconstruction problem in cryo-EM setting—demonstrate that \textsc{BlinDNO} reliably recovers governing parameters and consistently outperforms existing neural inverse operator baselines.
\end{abstract}

\begin{keywords}
Operator learning; Stochastic dynamic; Inverse problem.
\end{keywords}

\begin{MSCcodes}
68Q25, 68R10, 68U05
\end{MSCcodes}

\section{Introduction}
Stochastic dynamics are fundamental to advancing our understanding of natural phenomena. By leveraging massive datasets to uncover the underlying stochastic equations that govern complex physical systems, we can substantially enhance our capacity to model and predict system behavior across diverse scientific disciplines\cite{DMKEBSK23Chaos,schmidt2009distilling,chen2024constructing}. In the classical stochastic system identification problem, one collects either trajectory data~$\{\{t_i^j, x_i^j\}_{i=1}^n\}_{j=1}^m$~or aggregate measurements~$\{t_i, \{x_i^j\}_{j=1}^m\}_{i=1}^n$~and then employs Bayesian inference, sparse regression, weak‐form methods, or machine‐learning techniques to extract the evolution law~$x_{t+1} = f(x_t, t, \omega)$\cite{opper2019variational,chen2021solving,messenger2022learning,zhu2024dyngma,lu2024weak}. The effectiveness of these approaches, however, hinges on the availability of precise time labels \(t_i\), whose acquisition may be hindered by technical constraints in many practical settings. When exact time stamps are unavailable and only a distribution of observation times is known, a novel inverse problem emerges: reconstructing the stochastic dynamics without time label.

In this setting, the probability density function of an observable state evolves according to the probabilistic transport equation
\begin{equation}\label{eqn_intro}
  \frac{\partial \rho(x,t)}{\partial t} = \mathcal{L}\,\rho(x,t),
\end{equation}
where \(\mathcal{L}\) denotes a suitable probability evolution operator(e.g. Fokker-Planck operator). Suppose that the observation times \(t_i\) are sampled from a known distribution \(P(t)\), and that the initial condition is given by \(\rho(x,0)=\rho_0(x)\).The available data then consist of sampled density functions
\(
  \{\rho(x, t_i)\colon t_i \sim P(t)\}_{i=1}^M.
\)
The objective is to infer the operator \(\mathcal{L}\) directly from these distributions. This problem arises in various biological applications, such as reconstructing conformational transitions in single‐particle cryo‐EM and inferring developmental trajectories from single‐cell RNA sequencing data\cite{rubinstein2015alignment,papasergi2024time,gilles2025cryo,grun2015single,saelens2019comparison}. For instance,  single-particle cryo-EM reconstructs 3D Coulomb-potential distributions of biomolecules at near–atomic resolution from thousands of static 2D particle images. Biomolecules exhibit conformational heterogeneity, and their functions are often governed by transitions among different stable states  $\rho_{\mathrm{start}}(x)$ and $\rho_{\mathrm{end}}(x)$. Several recent works have leveraged optimal-transport–guided dynamics to elucidate conformational transitions\cite{ecoffet2020morphot,ecoffet2021application}, while others have exploited optimal transport within neural networks to approximate more complex transport forces in density-function or latent spaces\cite{fan2024cryotrans,punjani20233dflex}. Yet, these methods yield only deformation trajectories and do not recover the underlying energy landscape or force field. This limitation constrains deeper mechanistic insights into molecular function.

Deep learning has recently emerged as an efficient surrogate for both forward and inverse maps in dynamical systems. Operator learning frameworks seek to infer maps defined by partial differential equations—such as parameter-to-solution or observation-to-parameter operators—directly from data. Prominent among these are Deep Operator Networks (DeepONets)\cite{lu2021learning}, which employ a “branch–trunk” architecture to decouple input functions from spatial evaluation points, and Fourier Neural Operators (FNOs)\cite{li2020fourier}, which leverage FFT-based convolutions to capture nonlocal and nonlinear interactions. Further variants enhance predictive accuracy by incorporating attention mechanisms or multigrid-inspired refinements\cite{wutransolver,he2023mgno,hao2023gnot}. In inverse settings, many studies construct neural surrogates—based on DeepONets, FNOs, and other architectures such as CNNs or UNets—to approximate solution-to-parameter or operator-to-parameter maps\cite{wu2019inversionnet,zhu2023fourier,liu2025neumann,zeng2025vivo}. However, these approaches typically require deterministic, discretized inputs rather than distributions. To address this, Roberto et al.\cite{molinaro2023neural} introduced the Neural Inverse Operator (NIO), which enforces input permutation invariance via feature averaging and employs randomized batching to learn a distribution-to-parameter map. Despite its innovations, NIO remains limited by the representational capacity of its DeepONet part and by information loss incurred during averaging.

Motivated by these challenges and existing methods, we introduce \textsc{BlinDNO}, a novel neural operator framework that leverages deep learning for time-label-free dynamical-system reconstruction. Our main contributions are as follows:
\begin{enumerate}
    \item We provide a rigorous definition of the general learning problem for reconstructing dynamical systems without time label and and decompose the resulting distributional neural operator into three components—an imaging operator, a feature-fusing operator, and a refinement operator. Building on this analysis, we propose \textsc{BlinDNO} to approximate distribution-to-parameter maps by integrating an attention-based feature-fusing module into a U-Net–based imaging operator, underpinned by an FNO backbone to enhance high-frequency detail recovery.
    \item We validate \textsc{BlinDNO} on a suite of problems governed by classical and quantum mechanics, demonstrating accurate reconstructions across varying dimensions, complex dynamics, and non-conservative force fields. Our method further accommodates arbitrary observation-time distributions.
    \item We apply \textsc{BlinDNO} to a realistic three-dimensional protein-folding example drawn from cryo-EM, illustrating its potential to reveal molecular mechanisms in practical structural-biology settings.
\end{enumerate}
The rest of the paper is organized as follows. In Section~\ref{sec:problem}, we present the mathematical formulation for the time-label-free dynamical system reconstruction problem. Section~\ref{sec:method} presents the discussion of a neural operator from distribution to functions and the proposed \textsc{BlinDNO} architecture. Section~\ref{sec:result} showcases numerical results. Finally, Section~\ref{sec:conclusion} provides conclusions of this work and gives some discussions of future works.

\section{Problem Formulation}\label{sec:problem}
Here, we consider an observable state defined on a domain 
\(\Omega \subset \mathbb{R}^d\), whose evolution is governed by 
a probabilistic transport equation parameterized by \(\theta^{\star}\):
\begin{equation}\label{eqn_main}
\begin{cases}
\displaystyle \frac{\partial \rho(x,t)}{\partial t} 
   \;=\; \mathcal{L}_{\theta^{\star}}\,\rho(x,t),\\[1ex]
\rho(x,0)=\rho_0(x),
\end{cases}
\end{equation}
where \(\mathcal{L}_{\theta^{\star}}\) denotes a transport operator dictated 
by the underlying physical principles. 

Let \(\mathcal{H}\) denote the infinite-dimensional space of 
probability density functions on \(\Omega \subset \mathbb{R}^d\). 
The dynamics specified in \eqref{eqn_main} induce an 
observation operator
\begin{equation}\label{eqn:observation_operator}
\Lambda_{\theta^{\star}} : [0,T]\;\longrightarrow\;\mathcal{H},
\quad
t \;\mapsto\; \rho(\cdot,t),
\end{equation}
which assigns to each observation time \(t\) the corresponding 
state density. Suppose that the observation times are sampled from a prescribed 
distribution \(\nu_t\in\mathcal{P}([0,T])\) supported on \([0,T]\), referred to as the 
observation-time distribution. The pushforward measure
\[
\nu_{\rho}^{\theta^{\star}} \;=\;\Lambda_{\theta^{\star}}\#\nu_t
\]
then characterizes the induced data distribution.  

In practice, we observe an unordered collection of samples,
\(
\bigl\{\rho(\cdot,t_i)\colon t_i\sim\nu_t\bigr\}_{i=1}^N,
\)
which we identify with the empirical probability measure 
\(\nu_{\rho}^{\theta^{\star},N} \in \mathcal{P}(\mathcal{H})\) defined by
\begin{equation}
  \nu_{\rho}^{\theta^{\star},N}  \;:=\;\frac{1}{N}\sum_{i=1}^N 
    \delta_{\rho(\cdot,t_i)}\approx\nu_{\rho}^{\theta^{\star}},
\end{equation}
where \(\delta_{y}\) denotes the Dirac measure, i.e., 
\(\delta_{y}(A)=1\) if \(y\in A\) and \(\delta_{y}(A)=0\) otherwise.  
This empirical measure serves as a discrete approximation of 
\(\nu_{\rho}^{\theta^{\star}}\). Our objective is to find the unknown parameter \(\theta\) such that the induced distribution \(\nu_{\rho}^{\theta}\) is sufficiently close to the observed data distribution \(\nu_{\rho}^{\theta^\star}\).  Let \(\mathcal{L}_{\theta^\star}\) denote the true evolution operator; then the parameter estimation problem can be formulated as
\begin{equation}\label{eqn_opt_problem}
\begin{aligned}
  \min_{\theta\in\Theta}\quad & d\bigl(\nu_{\rho}^{\theta},\,\nu_{\rho}^{\theta^\star}\bigr) \\
  \text{subject to}\quad &
  \begin{cases}
    \nu_{\rho}^{\theta} = \Lambda_{\theta}\#\nu_t,\\
    \nu_{\rho}^{\theta^\star} = \Lambda_{\theta^\star}\#\nu_t,
  \end{cases}
\end{aligned}
\end{equation}
where \(d(\cdot,\cdot)\) quantifies the discrepancy between probability distributions.
\paragraph{\textbf{Problem I: SDE driven dynamic}}
Consider the observable state $X_t\in\mathbb{R}^d$ evolves by the following stochastic differential equation:
\begin{equation}\label{eqn:sde}
   dX_t = \mu(X_t) \, dt + \sigma(X_t) \, dW_t,
\end{equation}
where $\mu(X_t)\in$ is the vector drift term, $\sigma(X_t)\in \mathbb{R}^{d\times d}$ is the diffusion term, $W_t$ is the Brownian motion in $\mathbb{R}^d$. Then the density function $\rho(x, t)$ of the above state $X_t$ can be described by the Fokker-Planck equation (FPE)\cite{risken1996fokker}, and we restate the result in Lemma \ref{lem:fp}.
\begin{lemma}\label{lem:fp}
    Suppose $X_t$ solves the SDE \eqref{eqn:sde},  then the probability density function $\rho(x,t)$ satisfies the following d-dimensional Fokker-Planck equation by the It\^{o} integral
    \begin{equation}\label{eqn:fp}
    \frac{\partial \rho}{\partial t} = -\nabla\cdot(\boldsymbol{\mu} \rho) + \sum_{i,j}^d \partial_{ij}(D_{ij}\rho),
    \end{equation}
    where $t\in[0,T]\subset\mathbb{R}$, 
\(
\mu = \bigl[\mu_1(x,t),\,\mu_2(x,t),\,\dots,\,\mu_d(x,t)\bigr]^T,
\)
and the diffusion matrix $[D_{ij}]=[D_{ij}(x,t)]$ is given by
\begin{equation}\label{eqn:diffusion}
D = \frac{1}{2}\,\sigma\,\sigma^T.
\end{equation}
\end{lemma}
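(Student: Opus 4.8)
The plan is to derive \eqref{eqn:fp} in weak form by testing against smooth compactly supported functions and invoking It\^o's formula. Write $\mathcal{A}$ for the second-order differential operator $\mathcal{A}\phi = \sum_i \mu_i\,\partial_i\phi + \sum_{i,j} D_{ij}\,\partial_{ij}\phi$ with $D=\tfrac12\sigma\sigma^{T}$. First I would fix an arbitrary $\phi\in C_c^\infty(\mathbb{R}^d)$ and apply It\^o's formula to $\phi(X_t)$, using $dX_t^i = \mu_i(X_t)\,dt + \sum_j\sigma_{ij}(X_t)\,dW_t^j$ together with the covariation identity $d\langle X^i,X^j\rangle_t = (\sigma\sigma^{T})_{ij}(X_t)\,dt$; this gives
\[
d\phi(X_t) = \mathcal{A}\phi(X_t)\,dt + \nabla\phi(X_t)^{T}\sigma(X_t)\,dW_t,
\]
where the coefficient $D=\tfrac12\sigma\sigma^{T}$ of \eqref{eqn:diffusion} enters precisely as the $\tfrac12$-weighted quadratic-variation term of the It\^o expansion.

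Next I would take expectations. Under the standard hypotheses ensuring well-posedness and finite moments of \eqref{eqn:sde} (for instance, globally Lipschitz coefficients of at most linear growth), the stochastic integral is a true martingale and drops out, so $\frac{d}{dt}\,\mathbb{E}[\phi(X_t)] = \mathbb{E}[\mathcal{A}\phi(X_t)]$. Expressing both expectations as integrals against the law $\rho(\cdot,t)$ of $X_t$ turns this into
\[
\frac{d}{dt}\int_{\Omega}\phi(x)\,\rho(x,t)\,dx \;=\; \int_{\Omega}\bigl(\mathcal{A}\phi\bigr)(x)\,\rho(x,t)\,dx.
\]
Integrating by parts on the right-hand side — once in the drift term, twice in each diffusion term, with all boundary contributions vanishing because $\phi$ has compact support — moves every derivative off $\phi$ and onto $\rho$ and $D$, producing $\int_\Omega \phi\,\bigl(-\nabla\cdot(\mu\rho) + \sum_{i,j}\partial_{ij}(D_{ij}\rho)\bigr)\,dx$. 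Since $\phi$ is arbitrary, the fundamental lemma of the calculus of variations yields \eqref{eqn:fp}, distributionally in general and pointwise under enough regularity.

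The algebra above is the easy part; the main obstacle is the analytic bookkeeping needed to make each step rigorous. One must know that $X_t$ genuinely admits a density $\rho(\cdot,t)$ and that $t\mapsto\rho(\cdot,t)$ is regular enough to interchange $\frac{d}{dt}$ with the spatial integral, and one must control the integrability of $\mu,\sigma$ along trajectories so that the $dW_t$ integral is a true rather than merely local martingale. I would secure this either by imposing uniform ellipticity (or a H\"ormander bracket condition) on $\sigma\sigma^{T}$ together with Lipschitz/linear-growth bounds — which furnish a smooth transition density and the required moment estimates — or, more economically, by reading \eqref{eqn:fp} in the distributional sense, where no a priori regularity of $\rho$ is required. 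I would also remark that it is the It\^o (as opposed to Stratonovich) convention in \eqref{eqn:sde} that pins the second-order coefficient to $D=\tfrac12\sigma\sigma^{T}$ rather than a form involving derivatives of $\sigma$.
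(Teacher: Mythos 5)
The paper does not prove this lemma: it is explicitly \emph{restated} from the literature with a citation to Risken's monograph, so there is no in-paper argument to compare against. Your derivation is the standard one — It\^o's formula applied to a test function $\phi\in C_c^\infty$, taking expectations to kill the martingale term, rewriting $\mathbb{E}[\mathcal{A}\phi(X_t)]$ as an integral against the law $\rho(\cdot,t)$, integrating by parts to pass to the formal adjoint $\mathcal{A}^{*}\rho = -\nabla\cdot(\mu\rho)+\sum_{i,j}\partial_{ij}(D_{ij}\rho)$, and concluding by arbitrariness of $\phi$ — and it is correct, including the identification of $D=\tfrac12\sigma\sigma^{T}$ from the quadratic-variation term and your explicit acknowledgment of the regularity hypotheses (existence of a density, true-martingale property, interchange of $d/dt$ with the spatial integral) needed to upgrade the weak formulation to the pointwise statement. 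Your closing remark that the It\^o convention is what fixes $D=\tfrac12\sigma\sigma^{T}$ is also apt, since the paper's Lemma explicitly flags the It\^o integral.
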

In this case, the probabilistic evolution equation is the FPE \eqref{eqn:fp} and we need to recover the unknown drift term $\boldsymbol{\mu}$ and diffusion matrix $D$ given the collected samples $\bigl\{\rho(\cdot,t_i)\colon t_i\sim\nu_t\bigr\}_{i=1}^N$.

\paragraph{\textbf{Problem II: Quantum-mechanical dynamic}}
Consider a quantum state \(\Psi(x,t)\in L^2(\mathbb{R}^d)\) evolving under the time‐dependent nonlinear Schrödinger equation in the form
\begin{equation}\label{eqn_schrodinger}
i\hbar\frac{\partial}{\partial t}\Psi(x,t)
=\biggl[-\frac{\hbar^2}{2m}\nabla^2 + V(x,t)
+ g\bigl(\alpha,|\Psi(x,t)|^2\bigr)\biggr]\Psi(x,t),
\quad
\Psi(x,0)=\Psi_0(x)\,,
\end{equation}
where \(V(x,t)\) is the time–dependent potential, \(\alpha\) parametrizes the nonlinear coupling \(g\), and \(\hbar\) is the reduced Planck constant.  The observable probability density is
\[
\rho(x,t)=\bigl|\Psi(x,t)\bigr|.
\]
Similarly, we can define the observation operator by
\[
\Lambda_{(V,\alpha)}:[0,T]\longrightarrow\mathcal{H},
\quad
t\mapsto \rho(\cdot,t)\,,
\]
and assume observation times \(t_i\sim\nu_t\).  We need to recover recover the unknown potential \(V\) and nonlinearity parameter \(\alpha\) given the collected data
\(
\bigl\{\rho(\cdot,t_i)\colon t_i\sim\nu_t\bigr\}_{i=1}^N.
\)

In both the SDE–driven and 
quantum‐mechanical dynamical settings, the forward problem is to 
determine the induced data distribution given the initial density 
\(\rho_0(x)\) or state \(\Psi(x,0)\), the parameters 
\(\theta \in \Theta\), and the observation‐time distribution 
\(\nu_t \in \mathcal{P}([0,T])\).  
This formulation naturally defines the forward operator
\begin{equation}\label{eqn:forward}
\mathcal{F}:\Theta\;\longrightarrow\;\mathcal{P}(\mathcal{H}),
\qquad
\theta\;\mapsto\;\Lambda_{\theta}\#\nu_t.
\end{equation}

To the best of our knowledge, the optimization problem \eqref{eqn_opt_problem} has not been addressed in the literature, owing to the inherent difficulty of comparing complex probability measures in the objective functional.  Rather than solving the optimization task\eqref{eqn_opt_problem} , an alternative paradigm—direct inversion—aims to approximate the inverse map
\begin{equation}\label{eqn:inverse_map}
\mathcal{F}^{-1}:\mathcal{P}(\mathcal{H})\;\longrightarrow\;\Theta,
\qquad
\nu_{\rho}^{\star}\;\mapsto\;\theta.
\end{equation}
Establishing rigorous guarantees for the existence and the uniqueness of this inverse map constitutes the principal challenge of the problem.  Inspired by recent data‐driven approaches in PDE inverse problems, which approximate inverse operators via deep learning, our work seeks to extend this framework to the recovery of parameters from distributional observations.
\section{Method}\label{sec:method}
\subsection{Neural Operators between Function Spaces}\label{sec:no_function}
Let \(\Omega\subset\mathbb{R}^d\) be  bounded open set, and consider the Banach spaces
  $\mathcal{X} = \mathcal{X}(\Omega;\mathbb{R}^{d_a})$ and$ 
  \Theta = \Theta(\Omega;\mathbb{R}^{d_u})$
of input and output functions defined on bounded Euclidean subsets $\Omega$.  We assume that there exists a target operator that we wish to learn
\[
  \mathcal{F}\colon \mathcal{X} \;\longrightarrow\; \Theta,
  \qquad
  \theta = \mathcal{F}(\rho),
\]
arising, for instance, from a parametric PDE.  In the supervised operator learning setting, given a finite dataset
\(
  \{(\rho^{(i)},\theta^{(i)})\}_{i=1}^N \subset \mathcal{X}\times\Theta,
\)
the learning task is to construct a data-driven surrogate operator
\[
  \mathcal{G}_\phi\colon \mathcal{X} \;\longrightarrow\; \Theta,
\]
by solving the \emph{empirical risk minimization} (ERM) problem
\[
  \min_{\phi\in\Phi}
  \frac{1}{N}\sum_{i=1}^N 
  \bigl\lVert \theta^{(i)} - \mathcal{G}_\phi(\rho^{(i)})\bigr\rVert_{\Theta}^2,
\]
where \(\Phi\) denotes the hypothesis space of admissible surrogate operators. Below we briefly summarize two representative architectures: the Deep Operator Network and the Spectral Neural Operator.

\subsubsection{Deep Operator Network (DeepONet)}
The DeepONet\cite{lu2021learning} architecture realizes \(\mathcal{G}_\phi\) through an encoder–decoder structure consisting of two subnetworks. Let \(\mathcal{E} = (L_j)_{j=1}^{d_\beta} \subset \mathcal{L}(\mathcal{X};\mathbb{R})\) denote a finite collection of continuous linear functionals on \(\mathcal{X}\), which act as probes of the input function \(\rho\). 
For example, if \(\mathcal{X}\) is a Hilbert space, each \(L_j\) may be taken as a projection onto a basis element. The \emph{branch net} \(\beta\colon \mathbb{R}^{d_\beta}\to\mathbb{R}^p\) processes the encoded inputs
\[
  \mathcal{E}(\rho) = \bigl(L_1(\rho),\dots,L_{d_\beta}(\rho)\bigr)\in \mathbb{R}^{d_\beta},
\]
producing output coefficients \(\beta(\mathcal{E}(\rho))\in\mathbb{R}^p\).  

In parallel, the \emph{trunk net} \(\tau\colon \Omega \to \mathbb{R}^p\) evaluates \(p\) basis functions \(\tau_1,\dots,\tau_p\) at a query location \(y\in\Omega\).  
The decoder then forms the approximation
\begin{equation}\label{eq:DeepONet}
  \bigl(\mathcal{G}_\phi^{\mathrm{DON}}(\rho)\bigr)(y)
  \;=\; \sum_{k=1}^p \beta_k\!\bigl(\mathcal{E}(\rho)\bigr)\,\tau_k(y),
  \qquad \rho\in\mathcal{X}, \; y\in\Omega,
\end{equation}
where \(\phi\) collects the parameters of both \(\beta\) and \(\tau\).  
In this formulation, the branch net supplies the coefficients while the trunk net provides location-dependent basis functions, and their combination yields an approximation in \(\Theta\).

\subsubsection{Spectral Neural Operator}
Let the \emph{channel dimension} be \(d_c \ge \max(d_a,d_u)\) and define the latent
function space
\(
  \mathcal{H}=\mathcal{H}(\Omega;\mathbb{R}^{d_c}).
\)
For convenience we set the hidden-layer spaces
\(
  V_t:=\mathcal{H}\ (t=0,1,\dots,T)
\).
A spectral neural operator realizes
\(\mathcal{G}\colon\mathcal{X}\to\Theta\) as
\[
  \mathcal{G}
  \;=\; \mathcal{Q} \circ \mathcal{L}_T \circ \cdots \circ \mathcal{L}_1 \circ \mathcal{P},
\]
with pointwise \emph{lifting} operator $\mathcal{P}:\mathcal{X}\rightarrow\mathcal{H}$  and \emph{projection} operator
$\mathcal{Q}:\mathcal{H}\rightarrow\mathcal{\Theta}$ defined as
\[
  (\mathcal{P}\rho)(x) := P(x,\rho(x)) \in \mathbb{R}^{d_c},
  \qquad
  (\mathcal{Q}h)(x) := Q(x,h(x)) \in \mathbb{R}^{d_u},
\]
where \(P:\Omega\times\mathbb{R}^{d_a}\to\mathbb{R}^{d_c}\) and \(Q:\Omega\times\mathbb{R}^{d_c}\to\mathbb{R}^{d_u}\)
are typically shallow neural networks.
Each hidden layer \(\mathcal{L}_t:V_{t-1}\to V_t\) acts as
\begin{equation}\label{eq:SNO_layer}
  (\mathcal{L}_t h)(x)
  \;=\;
  \sigma_t\!\Bigl(
      W_t\,h(x) \;+\; (\mathcal{K}_t h)(x) \;+\; b_t(x)
  \Bigr),
  \qquad x\in\Omega,
\end{equation}
where \(W_t\in\mathbb{R}^{d_c\times d_c}\), \(b_t\in\mathcal{H}\),
\(\sigma_t:\mathbb{R}\to\mathbb{R}\) is a nonlinearity acting componentwise on functions,
and \(\mathcal{K}_t:\mathcal{H}\to\mathcal{H}\) is the (generally nonlocal) kernel integral operator
\begin{equation}\label{eq:SNO_kernel}
  (\mathcal{K}_t h)(x)
  \;=\; \int_{\Omega} \kappa_t(x,y)\,h(y)\,\mathrm{d}y,
  \qquad
  \kappa_t:\Omega\times\Omega\to\mathbb{R}^{d_c\times d_c}.
\end{equation}
Different parameterizations of the matrix–valued kernel \(\kappa_t\) yield different
neural–operator architectures.

\medskip
\noindent\emph{Fourier Neural Operator (FNO).}
Assume \(\Omega=\mathbb{T}^d=[0,1]^d_{\mathrm{per}}\) and parameterize \(\mathcal{K}_t\) in Fourier space.
Let \(\widehat{h}(k)\in\mathbb{C}^{d_c}\) be the vector of Fourier coefficients of \(h\),
and let \(\Lambda_{k_{\max}}:=\{k\in\mathbb{Z}^d:\|k\|_\infty\le k_{\max}\}\) denote the retained modes.
The FNO layer specifies a (truncated) Fourier multiplier \(P_t^{(k)}\in\mathbb{C}^{d_c\times d_c}\) and acts by
\begin{equation}\label{eq:FNO_multiplier}
  \widehat{(\mathcal{K}_t h)}(k)
  \;=\;
  \begin{cases}
    P_t^{(k)}\,\widehat{h}(k), & k\in\Lambda_{k_{\max}},\\[0.3ex]
    0, & \text{otherwise},
  \end{cases}
\end{equation}
equivalently,
\begin{equation}\label{eq:FNO_modes}
  \bigl[(\mathcal{K}_t h)(x)\bigr]_{\ell}
  \;=\;
  \sum_{k\in\Lambda_{k_{\max}}}\;
  \sum_{j=1}^{d_c}
  \bigl(P_t^{(k)}\bigr)_{\ell j}\,
  \big\langle e^{2\pi i \langle k,\cdot\rangle},\, h_j \big\rangle_{L^2(\mathbb{T}^d;\mathbb{C})}\,
  e^{2\pi i \langle k,x\rangle},
  \qquad \ell=1,\dots,d_c.
\end{equation}
Thus \(\mathcal{K}_t\) is a translation–invariant convolution operator
whose matrix–valued kernel \(\kappa_t(x-y)\) has Fourier coefficients \(P_t^{(k)}\);
the computation is implemented efficiently via FFTs.

\medskip
\noindent\emph{Convolution–based and Graph Neural Operators.}
An alternative parameterization of the kernel operator \eqref{eq:SNO_kernel} 
is obtained by localizing the integration to a neighborhood of radius \(r>0\). 
Specifically,
\begin{equation}\label{eq:CNO_cont}
  (\mathcal{K}_t h)(x)
  \;=\;\int_{\Omega}\kappa_t(x,y)\,h(y)\,\mathrm{d}y
  \;\approx\;\int_{B_r(x)}\kappa_t(x,y)\,h(y)\,\mathrm{d}y,
\end{equation}
where \(B_r(x)\subset \Omega\) denotes the ball of radius \(r\) centered at \(x\).  A numerical approximation of \eqref{eq:CNO_cont} can be obtained by discretization.
Let \(\{y_i\}_{i=1}^M\subset B_r(x)\) denote the neighboring points of \(x\) and 
\(\mu(y_i)\) the associated quadrature weights. Then
\begin{equation}\label{eq:CNO_discrete}
  (\mathcal{K}_t h)(x) 
  \;\approx\; \sum_{i=1}^M \kappa_t(x,y_i)\,h(y_i)\,\mu(y_i).
\end{equation}
This localized formulation admits efficient realizations:  
if \(\Omega\) is represented by a regular grid, 
\eqref{eq:CNO_discrete} corresponds to a standard convolution (\emph{convolution-based neural operator}\cite{raonic2023convolutional}); 
if \(\Omega\) is discretized as an irregular mesh or point cloud, 
\eqref{eq:CNO_discrete} corresponds to a \emph{graph convolution}\cite{li2020neural} 
defined with respect to the graph connectivity of the discretization.  
\subsection{Neural Operator from distribution to functions}\label{sec:no_distribution}
In Section~\ref{sec:no_function}, existing neural operators are designed to approximate a mapping from the input function space \(\mathcal{X}\) to the output function space \(\mathcal{Y}\). However, in our setting we instead seek to approximate the inverse operator
\(
  \mathcal{F}^{-1}:\,\mathcal{P}(H)\;\longrightarrow\;\Theta,
\)
where \(\mathcal{P}(H)\) denotes the space of probability density functions and \(\Theta\) is the target function space.  We discretize the input distribution by drawing i.i.d.\ samples
\(
  \bigl\{\rho(\cdot,t_i)\colon t_i\sim\nu_t\bigr\}_{i=1}^N.
\)
To construct a neural operator capable of modeling \(\mathcal{F}^{-1}\), Molinaro et al.~\cite{molinaro2023neural} propose that the architecture satisfy the following properties:
\begin{enumerate}
  \item \textbf{Permutation invariance.}  The model must be invariant under any permutation of the i.i.d.\ samples \(\{\rho(\cdot,t_i)\}_{i=1}^N\).
  \item \textbf{Input-size independence.}  The model must accommodate arbitrary sample size \(N\) and maintain performance regardless of \(N\).
\end{enumerate}
Among neural architectures satisfying these two properties, the most representative examples are the PointNet\cite{qi2017pointnet} and DeepSet\cite{zaheer2017deep}, originally developed for learning from point-cloud data.
These models can be viewed as a class of \emph{permutation-invariant set functions}\cite{kimura2024permutation}.
Subsequent research has focused on designing permutation neural networks that approximate such functions effectively\cite{lee2019set,liu2023flatformer,qi2017pointnet++,guo2021pct}.
In what follows, we extend the theory of permutation-invariant set functions to the infinite-dimensional setting and introduce the notion of a \emph{permutation-invariant set operator}. To make this precise, we first define function‐valued tuples as follows
\begin{definition}[Function‐valued tuple]
Let \(\mathcal{H}\) be a function space.  A \emph{tuple} (or ordered \(n\)-tuple) is an ordered list of \(n\) elements \(\{\rho_i\}_{i=1}^n\), denoted
\[
  \mathcal{S} = (\rho_1,\dots,\rho_n).
\]
\end{definition}

In practice, the empirical measure is subject to a maximal sample size \(N\) due to instrumentation limits.  The set operator
\(
  \mathcal{F}\colon \bigcup_{k=1}^N\mathcal{H}^k \longrightarrow \Theta
\)
maps a function‐valued tuple to an element of \(\Theta\).  Denote by \(\Pi_{\mathcal{S}}\) the set of all permutations of \(\mathcal{S}\).  We then define
\begin{definition}[Permutation‐invariant set operator]
A set operator
\[
  \mathcal{F}\colon \bigcup_{k=1}^N\mathcal{H}^k \longrightarrow \Theta
\]
is \emph{permutation‐invariant} if for every \(\mathcal{S}\in\bigcup_{k=1}^N\mathcal{H}^k\) and every \(\pi\in\Pi_{\mathcal{S}}\),
\begin{equation}\label{eqn:permutation_invariance}
  \mathcal{F}\bigl(\rho_{\pi(1)},\dots,\rho_{\pi(n)}\bigr)
  = 
  \mathcal{F}\bigl(\rho_1,\dots,\rho_n\bigr),
\end{equation}
where \(n=|\mathcal{S}|\).
\end{definition}

A set operator that does not satisfy~\eqref{eqn:permutation_invariance} is called \emph{permutation‐sensitive}.  Designing neural operators that satisfy permutation invariance inherently addresses input‐size independence, but poses significant challenges in efficiently aggregating across all sample orderings.  A universal framework for constructing any such operator is provided by \emph{Janossy pooling}:

\begin{definition}[Janossy pooling~\cite{murphyjanossy}]
Let
\(\mathcal{F}\colon \bigcup_{k=1}^N\mathcal{H}^k \longrightarrow \Theta\)
be any (permutation‐sensitive) set operator.  For any tuple \(\mathcal{S}\) and its permutation set \(\Pi_{\mathcal{S}}\), the \emph{Janossy pooling} of \(\mathcal{F}\) is defined by
\begin{equation}\label{eqn:janossy_pooling}
  \widehat{\mathcal{F}}(\mathcal{S})
  \;\triangleq\;
  \frac{1}{|\Pi_{\mathcal{S}}|}
  \sum_{\pi\in\Pi_{\mathcal{S}}}
    \mathcal{F}\bigl(\rho_{\pi(1)},\dots,\rho_{\pi(n)}\bigr).
\end{equation}
Furthermore, one may post‐process \(\widehat{\mathcal{F}}\) by another operator \(\mathcal{G}\), yielding
\[
  \widetilde{\mathcal{F}}(\mathcal{S})
  = \mathcal{G}\bigl(\widehat{\mathcal{F}}(\mathcal{S})\bigr),
\]
for a suitable \(\mathcal{G}\colon\mathcal{Z}\to\Theta\).
\end{definition}
While \(\widetilde{\mathcal{F}}\) is manifestly permutation-invariant, its computational cost scales as \(O(n!)\), rendering it impractical for high-dimensional tasks. The \(k\)-ary Janossy pooling offers an efficient strategy to mitigate this cost:

\begin{definition}[\(k\)\nobreakdash-ary Janossy pooling~\cite{murphyjanossy}]
  Let \(\mathcal{S}_k\) denote the collection of all \(k\)-element subsets of \(\mathcal{S}\), for some \(k<|\mathcal{S}|\).  The \(k\)-ary Janossy pooling of a set operator \(\mathcal{F}\) with post-processing operator \(\mathcal{G}\) is defined by
  \begin{equation}\label{eq_kary}
    \widetilde{\mathcal{F}}(\mathcal{S})
    \;=\;
    \mathcal{G}\!\Biggl(
      \frac{(|\mathcal{S}|-k)!}{|\mathcal{S}|}
      \sum_{\mathcal{V}\in\mathcal{S}_k}
        \mathcal{F}(\mathcal{V})
    \Biggr).
  \end{equation}
\end{definition}
This approach recovers many practical models when applied to set functions rather than set operators.  For instance, when \(k=1\), Janossy pooling for set functions is equivalent to the Deep Sets architecture, which extends to set operators as
\begin{equation}\label{eq_1ary}
      \widetilde{\mathcal{F}}(\mathcal{S})
  \;=\;
  \mathcal{G}\!\Bigl(\sum_{\rho\in\mathcal{S}}\mathcal{F}(\rho)\Bigr).
\end{equation}
where \(\mathcal{F}\) is a classical operator mapping \(\mathcal{H}\) to \(\Theta\). This formulation provides a feasible way for extending a classical operator to a set operator. Hereafter, we treat a classical operator between function spaces and a set operator with a single input element as equivalent. As an illustrative example, the \textbf{Neural Inverse Operator (NIO)} adopts this \emph{Deep Sets} framework to construct a permutation-invariant set operator.  
Specifically, it employs a DeepONet, denoted by \(\mathcal{F}_{\text{DON}}\), as the base operator \(\mathcal{F}\), and a Fourier Neural Operator (FNO), denoted by \(\mathcal{G}_{\text{FNO}}\), as the post-processing operator \(\mathcal{G}\).  
This yields a discrete approximation of a distribution-to-function operator:
\begin{equation}\label{eq_nio_integral}
\tilde{\mathcal{F}}(\mu):=\mathcal{G}_{\text{FNO}}\left(\int_{\mathcal{H}}\mathcal{F}_{\text{DON}}(\rho)\mu(d\rho) \right).
\end{equation}
where \(\mu \in \mathcal{P}(\mathcal{H})\) denotes a probability measure on \(\mathcal{H}\),\(\mathcal{Y}\) represents a latent function space, \(\mathcal{F}_{\text{DON}} : \mathcal{H} \to \mathcal{Y}\), \(\mathcal{G}_{\text{FNO}} : \mathcal{Y} \to \Theta\), and the integral is understood as a \(\mathcal{Y}\)-valued Bochner integral.

Based on equation \eqref{eq_kary}, the design of a permutation-invariant set operator must consider three aspects:
\begin{enumerate}
  \item The permutation-sensitive operator $\mathcal{F}$, which serves as a feature extractor and must capture multiscale characteristics of the input distribution; 
  \item The aggregation step—i.e., the choice of $k$—which must balance the modeling of the interaction betweeen inputs against computational cost;
  \item The post-processing module, which requires sufficient expressive capacity to recover both low- and high-frequency components of unknown functions (e.g., potential, drift, and diffusion terms) from the aggregated representation, a task that may be highly nonlinear.
\end{enumerate}

\subsection{BlinDNO}\label{sec:blindno}
Based on the analysis in Section \ref{sec:no_distribution}, we develop \textsc{BlinDNO}, an efficient architecture for approximating a permutation‐invariant set operator. We further establish the permutation invariance of the proposed architecture and provide a schematic diagram.

The inputs in our problem are high‐dimensional complex distributions, we represent, for example in two dimensions, the grid function
\(
  \rho(x) = \sum_{i=1}^{M}\sum_{j=1}^{M}\rho_{i,j}\,\phi_{i,j}(x)
  \;\longrightarrow\;
  \boldsymbol{\rho} \in \mathbb{R}^{M \times M}.
\)
 supported on a bounded domain  \(\Omega\subset\mathbb{R}^d\). Although the Fourier Neural Operator (FNO) has demonstrated effectiveness in operator learning, its global spectral convolution can be computationally expensive. On the other hand, the expressive power of the vanilla DeepONet is constrained by its network architecture in high‐dimensional, complex operator learning tasks. Under these conditions, for the computationally demanding permutation‐sensitive operator \(\mathcal{F}\), convolution‐based architectures—particularly the U‐Net—provide a favorable choice. The effectiveness of the U‐Net stems from its ability to process data on structured grids through local convolutions, a feature closely related to multigrid methods in the numerical solution of PDEs.
 \begin{figure}[!htbp]
  \centering
\includegraphics[width=\textwidth]{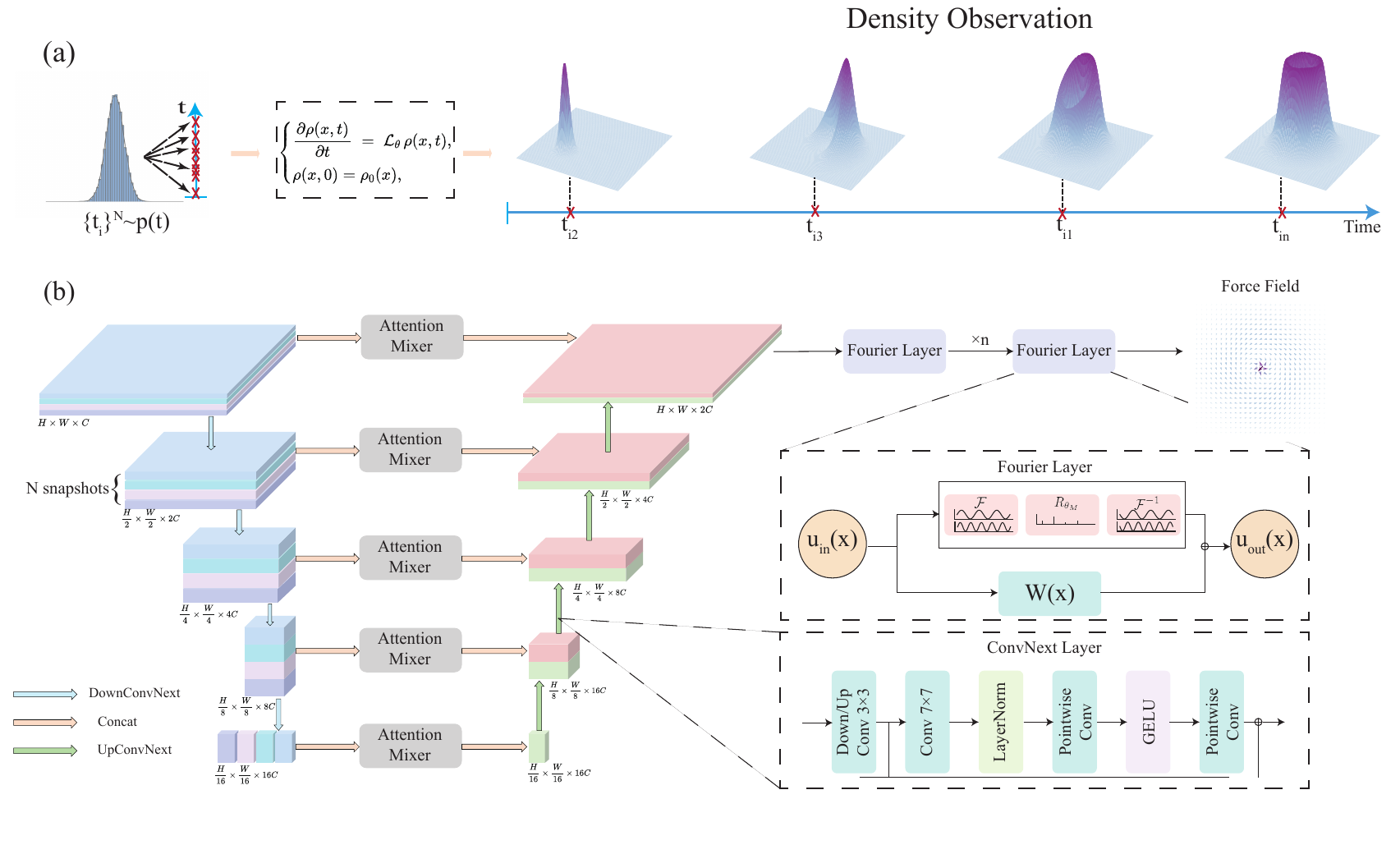}
  \caption{(a) Schematic illustration of the time-Label-Free dynamical system reconstruction problem. (b)The architecture of the \textsc{BlinDNO}.}
  \label{fig:1}
\end{figure}

\subsubsection{U-Net as a Neural Operator}\label{sec:unet_function}
We now describe the U‐Net architecture from the perspective of neural operators. The U‐Net operator is built from three principal components: (i) multi‐channel convolution, (ii) down‐ and up‐sampling, and (iii) skip connections. Let $\Omega_{h_\ell} = \{x_i^\ell\}_{i=1}^{N_\ell}\subset\Omega$ be a uniform mesh of \(\Omega\subset\mathbb{R}^d\) with spacing \(h_\ell\), and consider a nested hierarchy $ h_0 > h_1 > \cdots > h_L$. We define $V_{h_\ell}$ as the space of grid functions on $\Omega_{h_\ell}$
\[
  V_{h_\ell}(\Omega;\mathbb{R}^{d_\ell})
  = \bigl\{\,\rho\colon \Omega_{h_\ell}\to\mathbb{R}^{d_\ell}\bigr\},
\]
We define three families of linear operators between these spaces:
\begin{itemize}
  \item \emph{Downsampling Operator}: The downsampling operator $ \mathcal{D}^{\ell+1}_\ell \colon V_{h_\ell}(\Omega;\mathbb{R}^{d_\ell})
      \;\longrightarrow\;
      V_{h_{\ell+1}}(\Omega;\mathbb{R}^{d_\ell})$ maps a fine-grid function to  a coarse-grid function
    \[
      \bigl(\mathcal{D}^{\ell+1}_\ell\,\rho\bigr)(x)
      = \sum_{y\in\mathcal N_\ell(x)} \omega^\ell(y,x)\,\rho(y),
      \;x\in\Omega_{h_{\ell+1}}, 
    \]
    where \(\mathcal N_\ell(x)\) is the patch of fine‐grid nodes associated with \(x\) and \(\omega^\ell\) are learnable averaging kernels.

  \item \emph{Upsampling Operator:} The upsampling operator $ \mathcal{P}^\ell_{\ell+1}\colon V_{h_{\ell+1}}(\Omega;\mathbb{R}^{d_\ell})
      \;\longrightarrow\;
      V_{h_\ell}(\Omega;\mathbb{R}^{d_\ell})$ maps a coarse-grid function to a fine-grid function
    \[
      \bigl(\mathcal{P}^\ell_{\ell+1}\,\rho\bigr)(x)
      = \sum_{y\in\mathcal C_\ell(x)} \eta^\ell(y,x)\,\rho(y),
      \;x\in\Omega_{h_\ell},
    \]
    where \(\mathcal C_\ell(x)\) collects the coarse neighbors of $x$, and \(\eta^\ell\) are learnable interpolation kernels.

  \item \emph{Multi‐channel Convolution Operator:} The multi‐channel convolution operator $\mathcal{C}_\ell:V_{h_\ell}(\Omega;\mathbb{R}^{d_\ell})\to V_{h_\ell}(\Omega;\mathbb{R}^{d_{\ell+1}})$ acts as a localized kernel integral:
\[
  (\mathcal{C}_\ell u)(x)
  \;=\;
  \sum_{y\in B_r(x)} K_\ell(x,y)\,u(y)\,\mu_{h_\ell}(y),
  \qquad x\in\Omega_{h_\ell},
\]
where $B_r(x)$ is a radius-$r$ neighborhood, $K_\ell(x,y)\in\mathbb{R}^{d_{\ell+1}\times d_\ell}$ encodes all channel interactions and $\mu_{h_\ell}(y)$ is the discrete cell weight; $r$ is chosen small for computational efficiency .

    \item \emph{Skip Connection}: The skip operator
    \begin{equation}\label{eqn:skip}
    \begin{array}{cccl}
        \mathcal{S}_\ell :  &  V_{h_\ell}(\Omega;\mathbb{R}^{d_\ell})
      \times
      V_{h_\ell}(\Omega;\mathbb{R}^{\tilde d_\ell})&  \longrightarrow &V_{h_\ell}(\Omega;\mathbb{R}^{d_\ell+\tilde d_\ell})\\
         & \bigl(\mathcal{S}_\ell(u, \tilde u)\bigr)(x)
      & = &\bigl(u(x),\,\tilde u(x)\bigr),
      \quad
      x\in\Omega_{h_\ell} 
    \end{array}
    \end{equation}
    is the direct-sum embedding which corresponds to feature-wise concatenation and provides an identity shortcut between encoder and decoder representations at the same resolution.
\end{itemize}

In summary, the U‐Net neural operator 
\[
  \mathcal{F}_{\rm UNet}\colon V_{h_0}(\Omega;\mathbb{R}^{d_0})
  \;\longrightarrow\;
  V_{h_0}(\Omega;\mathbb{R}^{d_0})
\]
is given by the following encoder–decoder recursion.  First define the encoder outputs
\[
  e^0 = \rho,\qquad
  e^{\ell+1}
  = \bigl(\sigma\circ\mathcal{D}^{\ell+1}_\ell\circ \mathcal{C}_\ell\bigr)(e^\ell),
  \quad \ell=0,\dots,L-1.
\]
where $\sigma$ is a point-wise nonlinear activation operator(e.g. ReLU, GELU). At the bottleneck at the coarsest level, set
\[
  d^L = \mathcal{C}_L(e^L).
\]
Then propagate through the decoder via skip‐connections:
\[
  d^\ell
  = \bigl(\sigma\circ\mathcal{C}_\ell\circ \mathcal{S}_\ell\circ \bigl[e^\ell,\;
    \mathcal{P}^\ell_{\ell+1}(d^{\ell+1})\bigr]\bigr),
  \quad \ell=L-1,\dots,0,
\]
where \(\bigl[e^\ell,\;\mathcal{P}^\ell_{\ell+1}(d^{\ell+1})\bigr]\) denotes the pair of encoder and upsampled decoder features at level \(\ell\).  Finally, the output of the UNet operator is
\begin{equation}\label{eqn:UNet}
    \mathcal{F}_{\rm UNet}(\rho) = d^0.
\end{equation}
\subsubsection{Permutation-invariant UNet}
We now turn to the architecture of \textsc{BlinDNO}. Although the vanilla U-Net is computationally efficient, its simple structure often underperforms global convolutional networks such as the Fourier Neural Operator when tackling high-dimensional, complex problems. To enhance the representational capacity of the U-Net, we apply ConvNeXt layers to the outputs of both the encoder and the decoder (see Fig.~1).

The ConvNeXt module applies a $7\times7$ depthwise convolution for larger receptive field, followed by layer normalization, a two‐layer pointwise MLP with GELU nonlinearity, and a residual connection to preserve gradient flow:
\begin{equation}
\mathcal{C}_{\text{CN}}(\rho) \;\triangleq\; \rho\;+\;\mathcal{C}_{1\times1,2}\circ\;\mathrm{GELU}\circ\mathrm{LN}\circ \mathcal{C}_{1\times1,1}\,\circ\mathcal{DC}_{7\times7}(\rho)\,,
\end{equation}
where $\mathcal{DC}_{7\times7}$ is a depthwise convolution with a $7\times7$ kernel, $\mathcal{C}_{1\times1,1}$ and $\mathcal{C}_{1\times1,2}$ are pointwise convolutions, LN denotes layer normalization, and GELU is the Gaussian Error Linear Unit.

To establish a permutation‐invariant set operator, we represent the inputs
\[
(\rho_1,\ldots,\rho_N) = \bigl(\rho(\cdot,t_1),\ldots,\rho(\cdot,t_N)\bigr)
\]
as an $N$‐tuple. For convenience, define
\[
\mathcal{E}_{\ell+1}(\cdot)
\;=\;
\prod_{i=1}^{\ell}
\!\Bigl(\mathcal{C}_{\mathrm{CN}}\circ\sigma\circ\mathcal{D}^{\ell+1}_{\ell}\circ \mathcal{C}_{\ell}\Bigr)(\cdot)
\]
to be the overall encoder for the $(\ell+1)$-th layer. Applying this shared‐parameter encoder to each input yields the output tuple at layer $\ell$:
\begin{equation}\label{eqn:input_enc}
e_{\ell+1}
\;=\;
(z_1,\ldots,z_N)
\;=\;
\bigl(\mathcal{E}_{\ell+1}\rho_{1},\ldots,\mathcal{E}_{\ell+1}\rho_{N}\bigr).
\end{equation}
Since the inputs are i.i.d.\ realizations from a common distribution, the mutual information between the elements of $\mathcal{S}_{\ell+1}$ can facilitate modeling of the input distribution and thereby improve predictive accuracy. To this end, we employ multi‐head self‐attention, which is permutation‐equivariant, to extract multi‐scale features with high mutual information. We first consider single‐head attention mechanism and define the key, query, and value operators at layer $\ell$ as
\begin{equation}
\begin{aligned}
\mathcal{K}_{\ell+1}\colon &\; V_{h_{\ell+1}}(\Omega;\mathbb{R}^{d_{\ell+1}})
\;\to\;
V_{h_{\ell+1}}(\Omega;\mathbb{R}^{d_{k}}),\\
\mathcal{Q}_{\ell+1}\colon &\; V_{h_{\ell+1}}(\Omega;\mathbb{R}^{d_{\ell+1}})
\;\to\;
V_{h_{\ell+1}}(\Omega;\mathbb{R}^{d_{q}}),\\
\mathcal{V}_{\ell+1}\colon &\; V_{h_{\ell+1}}(\Omega;\mathbb{R}^{d_{\ell+1}})
\;\to\;
V_{h_{\ell+1}}(\Omega;\mathbb{R}^{d_{v}}),
\end{aligned}
\end{equation}
each being a linear map that lifts the input function to the appropriate dimension. Let $\{k_j\}_{j=1}^N$, $\{q_j\}_{j=1}^N$, and $\{v_j\}_{j=1}^N$ denote the input key, query, and value functions in $V_{h_{\ell+1}}(\Omega;\mathbb{R}^{d_{\ell+1}})$. Assuming $d_{k}=d_{q}$, we denote
\[
k^j = \mathcal{K}_{\ell+1}[k_j],\quad
q^j = \mathcal{Q}_{\ell+1}[q_j],\quad
v^j = \mathcal{V}_{\ell+1}[v_j],
\]
the embedded key, query, and value functions. The output of the single‐head attention operator for token $j$ is then
\begin{equation}\label{eqn:output_sa}
o^j
\;=\;
\mathrm{Softmax}\!\Bigl(\tfrac{1}{\tau}
\bigl[\langle q^j,k^1\rangle,\ldots,\langle q^j,k^N\rangle\bigr]\Bigr)
\;\bigl[v^1,\ldots,v^N\bigr]^\top,
\end{equation}
where $\tau>0$ is the temperature hyperparameter. The single-head self-attention operator corresponds to the special case $k_j=q_j=v_j$.

The multi-head attention applies the single-head attention described above separately for multiple head $h\in\{1\ldots,H\}$ and concatenate the output together
\begin{equation}\label{eqn:output_attn}
\mathbf{o}^j = \text{Concat}(o^{j,1},\ldots,o^{j,H})     
\end{equation}
where $o^{j,i}$ is the $j$-th output of the $i$-th head. For convenience, we denote the multi-head self-attention operator and cross-attention operator as $\text{SelfAttn}(\cdot)$ and $\text{Attn}(\cdot,\cdot,\cdot)$, respectively. 

Finally, to obtain a permutation‐invariant, set‐level feature, we average over the output tokens of multi-head self-attention operator
\begin{equation}\label{eqn:out_enc_feqture}
    \tilde{e}_{\ell+1} = \frac{1}{N}\sum_{j=1}^N \mathbf{o}^j = \text{Avg}\circ\text{SelfAttn}\circ \prod_{i=1}^{\ell}
\!\Bigl(\mathcal{C}_{\mathrm{CN}}\circ\sigma\circ\mathcal{D}^{\ell+1}_{\ell}\circ \mathcal{C}_{\ell}\Bigr)((\rho_1,\ldots,\rho_N)).
\end{equation}
These encoder features, extracted at multiple scales, are injected into the decoder via skip connections (cf.\ \eqref{eqn:skip}), and the final output is produced by successive upsampling layers:
\begin{equation}\label{eqn:output_unet}
  d^\ell
  = \bigl(\sigma\circ\mathcal{C}_\ell\circ \mathcal{S}_\ell\circ \bigl[\tilde{e}^{\ell+1},\;
    \mathcal{P}^\ell_{\ell+1}(d^{\ell+1})\bigr]\bigr),
  \quad \ell=L-1,\dots,0,
\end{equation}

We denote the above neural operator as $\mathcal{F}_{BlinDNO}$. The following proposition establishes its invariance under input permutations.

\begin{proposition}
    The modified UNet operator $\mathcal{F}_{BlinDNO}\colon \bigcup_{k=1}^N\mathcal{H}^k \longrightarrow \Theta$ is a permutation invariant set operator.
\end{proposition}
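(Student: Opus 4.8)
The plan is to show permutation invariance by tracing a permutation $\pi$ of the input tuple $(\rho_1,\dots,\rho_N)$ through each stage of the architecture, verifying that the per-token (encoder) stages are permutation \emph{equivariant}, that the self-attention mixer is equivariant, and that the final averaging step is \emph{invariant}; composing an invariant map after equivariant maps yields an invariant map, and the decoder acts on the already-invariant feature so it cannot reintroduce order dependence. Concretely, fix $\mathcal{S}=(\rho_1,\dots,\rho_N)$ and $\pi\in\Pi_{\mathcal{S}}$, and write $\pi\cdot\mathcal{S}=(\rho_{\pi(1)},\dots,\rho_{\pi(N)})$.

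First I would treat the shared encoder. Each operator $\mathcal{C}_\ell$, $\mathcal{D}^{\ell+1}_\ell$, the pointwise nonlinearity $\sigma$, and the ConvNeXt block $\mathcal{C}_{\mathrm{CN}}$ acts \emph{independently on each token with the same parameters}; hence $\mathcal{E}_{\ell+1}$ applied to $\pi\cdot\mathcal{S}$ yields $(\mathcal{E}_{\ell+1}\rho_{\pi(1)},\dots,\mathcal{E}_{\ell+1}\rho_{\pi(N)})$, i.e.\ $e_{\ell+1}$ is permuted by the \emph{same} $\pi$. This is the formal statement that the encoder is permutation equivariant; I would record it as a short lemma (or inline observation) proved by the definition of applying an operator tokenwise.

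Next I would handle the multi-head self-attention layer. For a fixed query token, the softmax weights in \eqref{eqn:output_sa} are computed from the multiset $\{k^1,\dots,k^N\}$ and the value tuple $[v^1,\dots,v^N]$; permuting the tokens by $\pi$ permutes the entries of the score vector and the rows of the value stack by the same $\pi$, and since softmax is applied entrywise and the weighted sum $\sum_j \mathrm{(softmax)}_j\, v^j$ is over all $j$, the output assigned to the (relocated) query token $\pi(j)$ equals $o^j$ — i.e.\ $\mathrm{SelfAttn}$ is permutation equivariant (this is the standard fact that attention is equivariant when no positional encoding across tokens is used; note the spatial convolutions act \emph{within} each function and are untouched). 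Concatenating heads \eqref{eqn:output_attn} preserves this. Finally, the averaging step \eqref{eqn:out_enc_feqture}, $\tilde e_{\ell+1}=\frac1N\sum_{j=1}^N \mathbf{o}^j$, is a sum over all tokens and is therefore invariant under any reindexing, so $\tilde e_{\ell+1}$ is unchanged when $\mathcal{S}$ is replaced by $\pi\cdot\mathcal{S}$. Since this holds at every level $\ell$, the entire collection of skip features $\{\tilde e_{\ell+1}\}_{\ell}$ is permutation invariant, and the decoder recursion \eqref{eqn:output_unet} is a deterministic function of these invariant features alone; hence $\mathcal{F}_{BlinDNO}(\pi\cdot\mathcal{S})=\mathcal{F}_{BlinDNO}(\mathcal{S})$, which is \eqref{eqn:permutation_invariance}. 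Input-size independence is immediate since no step references $N$ except through sums/averages that are defined for every $N\le N_{\max}$.

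The main obstacle — really the only subtle point — is pinning down that the self-attention block is genuinely equivariant \emph{across the token index} while the U-Net convolutions, downsamplings, and ConvNeXt blocks operate \emph{within} the spatial domain $\Omega_{h_\ell}$ of each token and thus commute with token permutations; one must be careful that no layer mixes the token axis in an order-dependent way (e.g.\ there is no positional embedding added along the token dimension, and the attention temperature $\tau$ and head structure are token-independent). Once the equivariance of every pre-averaging stage and the invariance of the averaging stage are established, the composition argument is routine.
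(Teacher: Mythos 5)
Your proposal is correct, but it takes a genuinely different route from the paper's proof. You argue by composition: the shared-parameter encoder acts tokenwise and is therefore permutation \emph{equivariant}; self-attention without positional encoding along the token axis is equivariant (the standard fact, which you verify by tracking how $\pi$ permutes the score vector and the value stack in \eqref{eqn:output_sa}); the final average \eqref{eqn:out_enc_feqture} is invariant; and the decoder \eqref{eqn:output_unet} consumes only the invariant skip features. The paper instead proves invariance of the $\mathrm{Avg}\circ\mathrm{SelfAttn}$ block by rewriting it \emph{exactly} as a $2$-ary Janossy pooling: it introduces pairwise kernels $\psi(x_j,x_i)$ and $\chi(x_j,x_i)$ and an auxiliary map $H(z_j,z_i;\mathcal{T})$ whose softmax denominator is symmetric in the key tuple, and shows $\mathrm{Avg}\circ\mathrm{Attn}(e,e,e)=\frac{1}{N^2}\sum_{j}\sum_{i}H(z_j,z_i;e)$, so invariance follows from the general Janossy framework of \eqref{eq_kary}. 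Your argument is more elementary and modular, and makes explicit the one subtle point (no layer may mix the token axis in an order-dependent way) that the paper leaves implicit; the paper's identification buys a structural payoff beyond the proposition itself, namely that the mixer realizes a second-order Janossy pooling capturing pairwise sample interactions and degenerates to the Deep Sets / NIO aggregation \eqref{eq_nio_integral} as $\tau\to\infty$, which the paper exploits in the discussion following the proof. Neither route has a gap; they are complementary.
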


\begin{proof}
It suffices to show that the encoder $\eqref{eqn:out_enc_feqture}$ of each layer is permutation invariant. 
For clarity, we consider the single-head attention version.
 Let $e_{\ell+1} = (z_1,\ldots,z_N)$ be  the output tuple at layer $l$. We then introduce a fixed tuple of functions $\mathcal{T} =(y_1,\ldots,y_N)$ as the input of key operator at layer~$\ell+1$. The attention with keys $k^i=\mathcal{K}_{\ell+1}[y_i]$, queries  $q^i=\mathcal{Q}_{\ell+1}[z_i]$ and values $v^i=\mathcal{V}_{\ell+1}[z_i]$, followed by averaging, reads
\begin{equation}\label{eq:attn_avg_general}
\mathrm{Avg}\circ\mathrm{Attn}(e_{\ell+1},\mathcal{T},e_{\ell+1})
=\frac{1}{N}\sum_{j=1}^N 
\frac{\displaystyle\sum_{i=1}^N 
    \exp\!\left(\frac{\langle q^j,k^i\rangle}{\tau}\right) v^i}
     {\displaystyle\sum_{t=1}^N 
    \exp\!\left(\frac{\langle q^j,k^t\rangle}{\tau}\right)}.
\end{equation}

Define the pairwise kernels
\[
\psi(x_j,x_i)
:=\exp\!\Bigl(\tfrac{\langle \mathcal{Q}_{\ell+1}[x_j],\mathcal{K}_{\ell+1}[x_i]\rangle}{\tau}\Bigr)
    \,\mathcal{V}_{\ell+1}[x_i],
\qquad
\chi(x_j,x_i)
:=\exp\!\Bigl(\tfrac{\langle \mathcal{Q}_{\ell+1}[x_j],\mathcal{K}_{\ell+1}[x_i]\rangle}{\tau}\Bigr).
\]

We now show that the mapping in~\eqref{eq:attn_avg_general} can be written as a {\it 2-ary Janossy pooling}.
Consider the function
\[
H(z_j,z_i;\mathcal{T})
\;:=\;
\frac{N\,\psi(z_j,z_i)}{\displaystyle\sum_{t=1}^N\chi(z_j,y_t)}.
\]
Observe that the denominator is symmetric in~$\mathcal{T}$, so for any permutation $\pi$ of $\{1,\dots,N\}$,
\[
H(z_{\pi(j)},z_{\pi(i)};\pi\mathcal{T})=H(z_j,z_i;\mathcal{T}).
\]
Therefore, the aggregation with {\it 2-ary Janossy pooling} gives
\begin{equation}\label{eq:janossy_form}
\begin{array}{cll}
   \mathcal{J}_2(e_{\ell+1})& := & \frac{1}{N^2}\sum_{j=1}^N\sum_{i=1}^N H(z_j,z_i;\mathcal{T}) \\
     &= &\frac{1}{N}\sum_{j=1}^N
  \frac{\sum_{i=1}^N \psi(z_j,z_i)}{\sum_{i=1}^N \chi(z_j,z_i)}
=\mathrm{Avg}\circ\mathrm{Attn}(e_{\ell+1},\mathcal{T},e_{\ell+1})
\end{array}
\end{equation}
Hence the attention–average operator coincides exactly with a 2-ary Janossy pooling.

Finally, by setting $\mathcal{T}$ to the encoder output tuple 
$e_{\ell+1}=(z_1,\ldots,z_N)$ yields
\[
\tilde e_{\ell+1}
=\mathrm{Avg}\circ\mathrm{Attn}(e_{\ell+1},e_{\ell+1},e_{\ell+1})
=\mathcal{J}_2(e_{\ell+1}),
\]
establishing the permutation invariance of the encoder layer.
Since the decoder in~\eqref{eqn:output_unet} consists solely of pointwise convolutions, upsampling, and skip connections operating on permutation-invariant features, it introduces no order dependence. 
Therefore, the overall operator 
$\mathcal{F}_{\mathrm{BlinDNO}}$ 
is permutation invariant.
\end{proof}

From a distributional perspective, the proposed attention–then–average mechanism admits an integral representation analogous to~\eqref{eq_nio_integral}. 
Specifically, let $\{z_1,\ldots,z_{N}\}$ in the output tuple $e_{\ell+1}$ at layer~$\ell$ be i.i.d.~samples drawn from a probability distribution 
$\mu$ over the function space $V_{h_{\ell+1}}(\Omega;\mathbb{R}^{d_{\ell+1}})$, i.e., 
$z_i\sim \mu\in \mathcal{P}(V_{h_{\ell+1}}(\Omega;\mathbb{R}^{d_{\ell+1}}))$. 
Then, the empirical output of the attention and averaging operation in~\eqref{eqn:out_enc_feqture} can be written as
\begin{equation}
\begin{aligned}
\text{Avg}\circ\text{Attn}(e_{\ell+1},e_{\ell+1},e_{\ell+1})
&= \frac{1}{N}\sum_{j=1}^N 
   \frac{\sum_{i=1}^N \exp\!\left(\tfrac{\langle q^j,k^i\rangle}{\tau}\right)v^i}
        {\sum_{t=1}^N \exp\!\left(\tfrac{\langle q^j,k^t\rangle}{\tau}\right)} \\[0.8ex]
&\xrightarrow[N\to\infty]{} 
   \int 
   \frac{\displaystyle\int 
      \exp\!\left(\tfrac{\langle \mathcal{Q}_{\ell+1}[z],\mathcal{K}_{\ell+1}[z']\rangle}{\tau}\right)
      \mathcal{V}_{\ell+1}[z']\,\mu(dz')}
      {\displaystyle\int 
      \exp\!\left(\tfrac{\langle \mathcal{Q}_{\ell+1}[z],\mathcal{K}_{\ell+1}[s]\rangle}{\tau}\right)
      \mu(ds)}
   \,\mu(dz),
\end{aligned}
\label{eq:attn_integral}
\end{equation}
where the convergence holds in the Bochner sense under standard integrability and boundedness assumptions. 
We denote the right-hand side of~\eqref{eq:attn_integral} by $\mathcal{A}_{\ell+1}[\mu]$. Equation~\eqref{eq:attn_integral} thus defines a nonlinear double integral operator on the probability measure~$\mu$, 
in which the inner integral corresponds to an exponentially tilted expectation with respect to $\mu$, 
and the outer integral averages over all query samples.

Furthermore, as the temperature parameter $\tau \to \infty$, the exponential kernel becomes asymptotically uniform, i.e.,
\[
\exp\!\left(\tfrac{\langle \mathcal{Q}_{\ell+1}[z],\mathcal{K}_{\ell+1}[z']\rangle}{\tau}\right)
= 1 + \mathcal{O}(\tfrac{1}{\tau}),
\]
so that the normalized attention weights converge to unity. 
In this limit, the operator~$\mathcal{A}_{\ell+1}[\mu]$ reduces to the single-integral form
\begin{equation}
\mathcal{A}_{\ell+1}[\mu]
\;\xrightarrow[\tau\to\infty]{}\;
\int \mathcal{V}_{\ell+1}[z]\,\mu(dz),
\label{eq:attn_to_DS}
\end{equation}
which coincides with the Deep Sets (or $k=1$ Janossy pooling) representation~\eqref{eq_nio_integral}. 
Hence, the proposed attention–average mechanism can be interpreted as a continuous generalization of the Deep Sets formulation, 
where finite~$\tau$ introduces an exponentially weighted coupling between samples, thereby capturing higher-order statistical interactions within the input distribution~$\mu$.

\section{Numerical Experiments}\label{sec:result}
In this section, we present a series of numerical experiments to comprehensively evaluate the performance of the proposed method under diverse settings. 
We begin with one-dimensional demonstrations: 
(i) recovering stochastic dynamics governed by a stochastic differential equation (SDE) with unknown drift and diffusion coefficients, and 
(ii) inferring both linear and nonlinear Schr\"{o}dinger equations with unknown potential and nonlinear terms. 
Subsequently, we test our method on two-dimensional SDEs featuring non-uniform diffusion and non-conservative force fields. 
Finally, we apply \textsc{BlinDNO} to reconstruct realistic protein-folding trajectories, explicitly assessing its performance under varying observation-time distributions.

For all experiments, we benchmark against two baselines: the \textbf{Neural Inverse Operator (NIO)} and an augmented variant, \textbf{FNO-NIO}, in which the permutation-sensitive operator of NIO is replaced by Fourier Neural Operators (FNOs) to enhance representational capacity. In all of the experiments, we record $100$ density snapshots $\{p(\cdot,t_i)\}_{i=1}^{100}$ at randomly selected times $t_i\sim\nu_t$ as the model input.
To assess inversion accuracy, we report the relative $\ell_{2}$ error of each inferred quantity,
\begin{equation}
\boldsymbol{E}_{\theta}
=\frac{\lVert \theta-\theta^{\star}\rVert_{2}}
       {\lVert \theta^{\star}\rVert_{2}},
\end{equation}
where the parameter $\theta$ represents the drift function $\boldsymbol{\mu}$, diffusion matrix $D$, potential $V(x)$, or other unknown quantities of interest.

In addition, we evaluate the reconstruction quality of the inferred density $\rho_{\theta}(x,t)$ by computing its \emph{time-averaged relative $L^2$ error} against the reference density $\rho_{\theta^{\star}}(x,t)$, namely,
\begin{equation}
\boldsymbol{E}_{\boldsymbol{\rho}}
=\frac{1}{T}\int_{0}^T
  \frac{\lVert \rho_{\theta}(\cdot,t)
       -\rho_{\theta^{\star}}(\cdot,t)\rVert_{2}}{\lVert \rho_{\theta^{\star}}(\cdot,t)\rVert_{2}}dt
\end{equation}
This metric quantifies the average relative discrepancy between the inferred and ground-truth density evolutions over time.

All computations were performed on a workstation equipped with four NVIDIA RTX-4090 GPUs and 128~GB of system memory. 
Comprehensive details regarding dataset generation and training protocols are provided in Appendix~SM1.

\subsection{Example 1: 1D Fokker–Planck Equation}
\label{subsec:1d_fpe_mog}

We begin with an inverse reconstruction problem for a one-dimensional It\^{o} diffusion process whose drift term is derived from a mixture-of-Gaussians potential and whose diffusion coefficient is an unknown constant. The corresponding probability density $\rho(x,t)$ evolves according to the Fokker–Planck equation
\begin{equation}\label{eq:1d_sde_fpe_compact}
\partial_t \rho(x,t) = -\partial_x\!\bigl(\mu(x)\rho(x,t)\bigr) + D\,\partial_{xx}\rho(x,t), \qquad (x,t)\in \mathbb{R}\times[0,1],
\end{equation}
where the drift is given by 
\begin{equation}\label{eq:mog_drift}
\mu(x) = -\partial_x U(x), \quad 
U(x) = \sum_{i=1}^{3} A_i \exp\!\Bigl(-\frac{(x-c_i)^2}{2\sigma_i^2}\Bigr),
\end{equation}
where the potential parameters are randomly sampled as
$A_i\sim \mathrm{Unif}[1,2]$, 
$c_i\sim \mathrm{Unif}\!\bigl[\tfrac{5}{16},\tfrac{11}{16}\bigr]$, 
$\sigma_i\sim \mathrm{Unif}[0.025,0.1]$, and
$D\sim \mathrm{Unif}[1,2]$.

The computational domain is restricted to $\Omega = [0,1]$ with absorbing boundary conditions. Temporal observation times are independently sampled from the uniform distribution $\nu_t = \mathrm{Unif}[0,1]$, and the spatial discretization is fixed at $\Delta x = 1/80$. For each realization of $(\mu, D)$, we numerically evolve \eqref{eq:1d_sde_fpe_compact} on a refined temporal mesh to ensure stability and accuracy, employing a thermodynamically consistent finite-difference solver \cite{holubec2019physically}. Notably, as $D$ is constant, its estimate is taken as the spatial mean of the network’s output.

Figure~\ref{fig:2} and Table~\ref{tab:1} summarize the reconstruction results obtained by \textsc{BlinDNO} and several baseline methods. As shown in Fig.~\ref{fig:2}(a–c), \textsc{BlinDNO} achieves substantially more accurate recovery of both the drift and diffusion, particularly in regions near potential saddles. Furthermore, Figs.~\ref{fig:2}(d–h) demonstrate that the dynamics reconstructed by our method produce solutions closely aligned with the true system evolution, whereas baseline approaches exhibit pronounced deviations, especially in transient regimes.

 \begin{figure}[!htbp]
  \centering
\includegraphics[width=\textwidth]{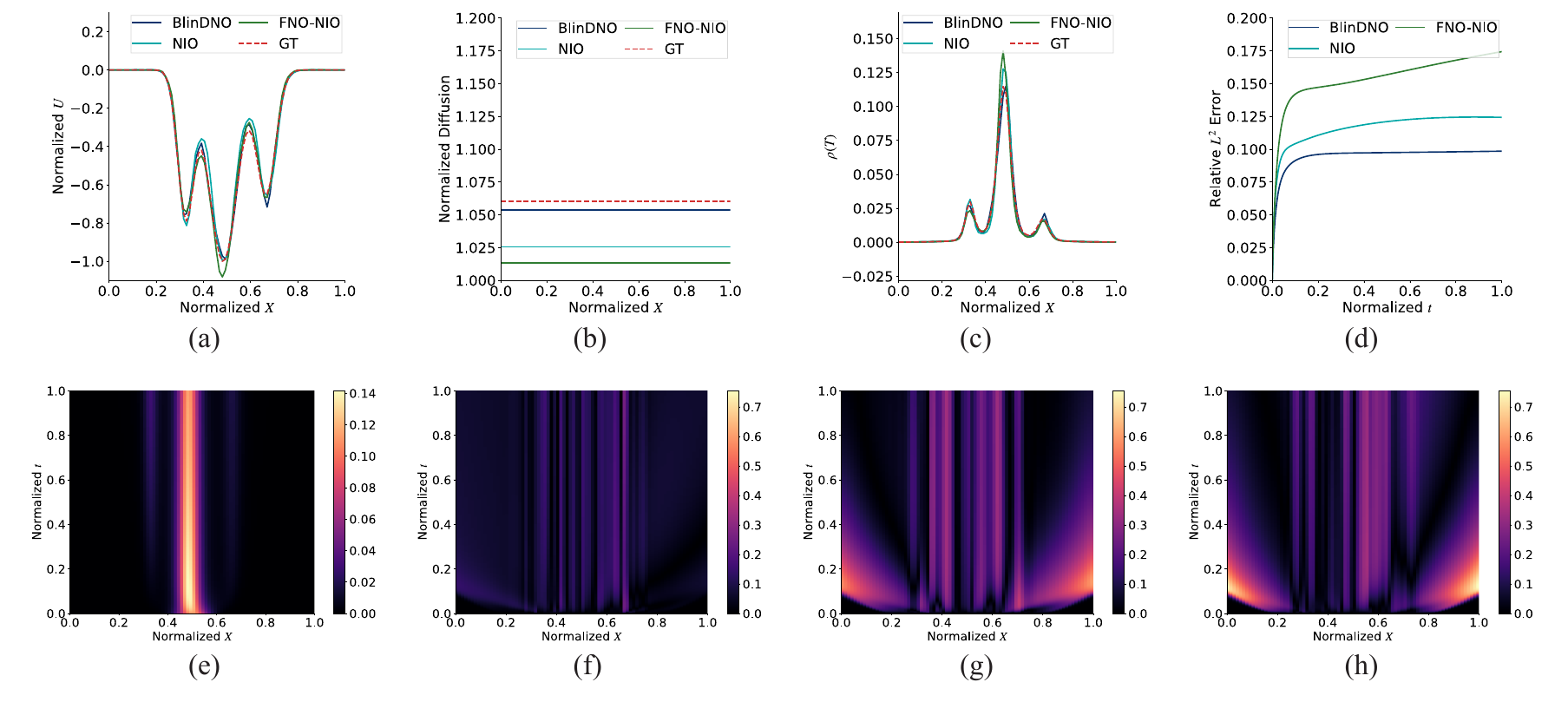}
  \caption{Inversion results for the one-dimensional Fokker–Planck equation (FPE) problem. 
(a) Reconstructed potential profile, $U(x)$. 
(b) Reconstructed diffusion term. 
(c) Comparison between the density $\rho(T)$ obtained from simulations of the reconstructed dynamical system and the ground truth. 
(d) Relative $L_{2}$ error at each time step between the simulated density and the ground truth. 
(e) Ground-truth density distribution, $\rho_{\mathrm{GT}}(x,t)$. 
(f)–(h) Pointwise errors in the simulated density function $\rho(x,t)$, obtained using the reconstructed potential and diffusion term with \textsc{BlinDNO}, NIO, and FNO–NIO.
}
  \label{fig:2}
\end{figure}

\subsection{Example 2: 1D Linear/Nonlinear Schrödinger Equation} \label{subsec:1d_schrodinger}
We next consider the inverse potential reconstruction problem for both the one-dimensional linear Schrödinger equation and  the Gross–Pitaevskii equation (GPE). In either case, the goal is to recover the unknown external potential $V(x)$ from unordered measurements of the probability density $\rho(x,t)=|\Psi(x,t)|^2$. The linear dynamics are governed by
\begin{equation}\label{eq:linear_schrodinger}
i\hbar\,\partial_t\Psi(x,t)
= \left[-\frac{\hbar^2}{2m}\nabla^2 + V(x)\right]\Psi(x,t),
\end{equation}
while the GPE augments \eqref{eq:linear_schrodinger} with nonlinear interaction terms,
\begin{equation}\label{eq:gpe}
i\hbar\,\partial_t\Psi(x,t)
= \left[-\frac{\hbar^2}{2m}\nabla^2 + V(x) + \beta|\Psi(x,t)|^2 + \kappa|\Psi(x,t)|^4\right]\Psi(x,t),
\end{equation}
where, in our experiments, we fix $\beta = \kappa = 2$. The computational domain is taken as $\Omega = [-10,10]$ with $N_x = 128$ uniformly spaced grid points, and the initial condition is prescribed by
\[
\Psi(x,0)=\frac{\sin(x)}{\cosh(x)}.
\]

The external potential combines harmonic and periodic components and is parameterized as
\begin{equation}\label{eq:V_form}
V(x) = a\,(x-x_0)^2 + b\,\cos\!\left(c\,(x-x_0)\right)^2,
\end{equation}
where the potential parameters are randomly sampled as $a\sim\mathrm{Unif}[0.1,0.3], 
b\sim\mathrm{Unif}[0.5,2],
c\sim\mathrm{Unif}[0.5,2],
x_0\sim\mathrm{Unif}[-3,3].$

For each realization of $V(x)$, we solve either \eqref{eq:linear_schrodinger} or \eqref{eq:gpe} up to $t_{\mathrm{final}}=5.0$ using a maximum time step of $\Delta t_{\max}=0.005$ and a second-order Strang splitting scheme to ensure stability and accuracy. Following this procedure, we generate $5000$ training samples and $1000$ testing samples.

As shown in Figs.~\ref{fig:3}(b) and \ref{fig:3}(f), the NIO model fails to capture the periodic component of the potential and primarily recovers only its harmonic structure. In contrast, both \textsc{BlinDNO} and \textsc{FNO--NIO} successfully reconstruct the harmonic and periodic components, with \textsc{BlinDNO} achieving noticeably higher accuracy. This improvement is further reflected in the dynamical simulations presented in Figs.~\ref{fig:3}(c)–(d) and \ref{fig:3}(g)–(h), where the trajectories generated using the reconstructed potentials show that \textsc{BlinDNO} yields dynamics that remain substantially closer to the ground truth. Quantitative comparisons are summarized in Table~\ref{tab:1}.

 \begin{figure}[!htbp]
  \centering
\includegraphics[width=\textwidth]{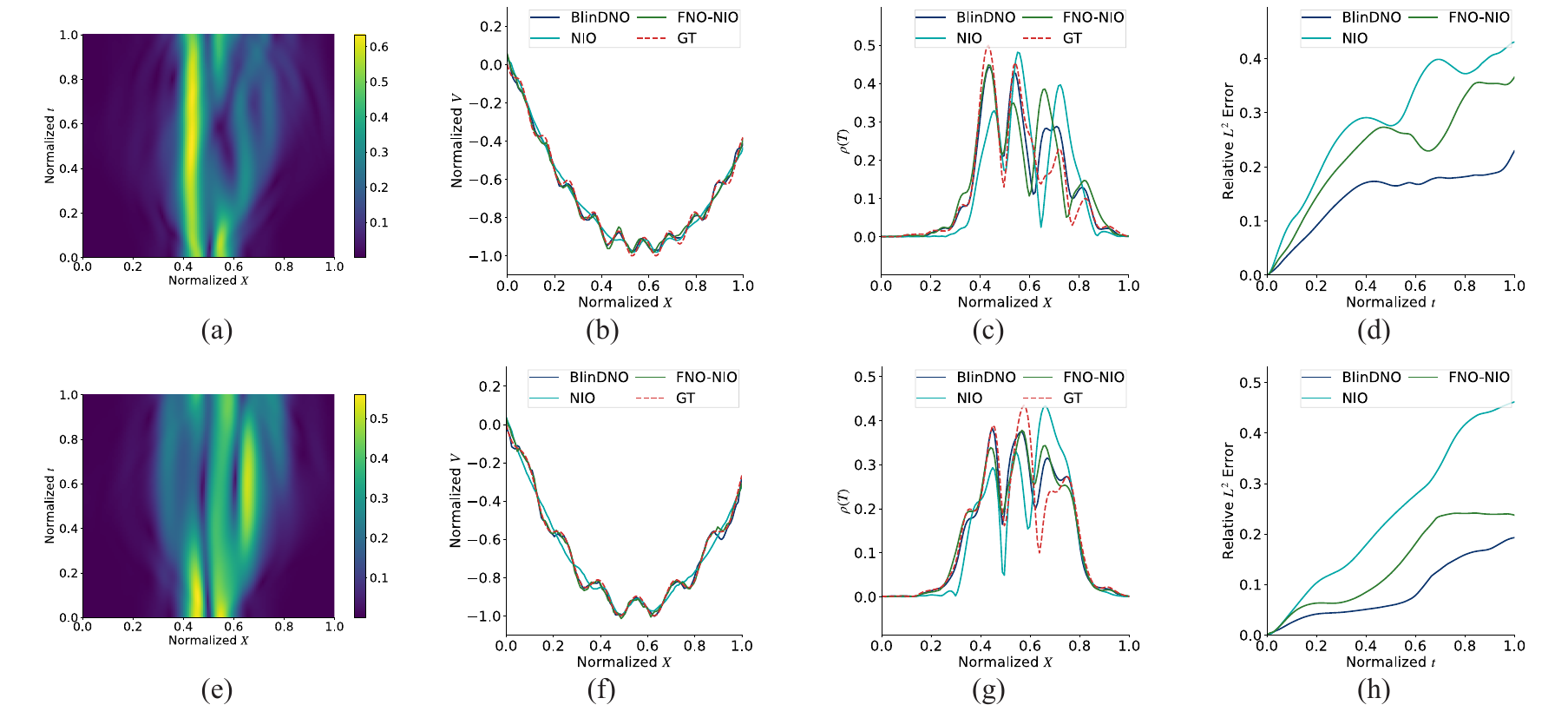}
  \caption{Inversion results for the 1D linear/nonlinear Schrodinger equation problem. 
(a) Ground-truth density distribution of the 1D linear Schrodinger equation, $\rho_{\mathrm{GT}}(x,t)$. 
(b) Reconstructed potential of the 1D linear Schrodinger equation, $V(x)$. 
(c) Comparison between the density $\rho(T)$ obtained from simulations of the reconstructed dynamical system and the ground truth. 
(d) Relative $L_{2}$ error at each time step between the simulated density and the ground truth. 
(a) Ground-truth density distribution of the 1D Gross-Pitaevskii equation, $\rho_{\mathrm{GT}}(x,t)$. 
(b) Reconstructed potential of the 1D 1D Gross-Pitaevskii equation, $V(x)$. 
(c) Comparison between the density $\rho(T)$ obtained from simulations of the reconstructed dynamical system and the ground truth. 
(d) Relative $L_{2}$ error at each time step between the simulated density and the ground truth.
}
  \label{fig:3}
\end{figure}
\subsection{Example 3: 2D Fokker Planck Equation with Non-uniform Diffusion Term}
\label{subsec:2d_fpe_nonuniform}
We next assess the performance of our framework on a 2D inverse problem involving the joint reconstruction of the drift field and a non-uniform diffusion coefficient. The dynamics are governed by an SDE whose drift is derived from a mixture-of-Gaussian potential, while the diffusion tensor is diagonal with nonconstant entries $D_{xx}(\boldsymbol{x})=D_{yy}(\boldsymbol{x})\equiv D(\boldsymbol{x})$. The corresponding probability density $p(\boldsymbol{x},t)$ evolves according to
\begin{equation}\label{eq:2d_fpe_nonuniform}
\partial_t \rho
= -\nabla\cdot(\boldsymbol{\mu}\,\rho)
  + \partial_{x_1x_1}\!\bigl(D\,\rho\bigr)
  + \partial_{x_2x_2}\!\bigl(D\,\rho\bigr),
\qquad (\boldsymbol{x},t)\in\mathbb{R}^2\times[0,1].
\end{equation}

The drift field is specified as
\begin{equation}\label{eq:2d_drift}
\boldsymbol{\mu}(\boldsymbol{x})=\nabla U(\boldsymbol{x}),
\qquad
U(\boldsymbol{x})
= \sum_{i=1}^{3}
A_i
\exp\!\Bigl(-\frac{\|\boldsymbol{x}-\boldsymbol{c}_i\|^2}{2\sigma_i^2}\Bigr),
\end{equation}
where the parameters are randomly sampled as  
\(
A_i\sim\mathrm{Unif}[1,2],
\boldsymbol{c}_i\sim\mathrm{Unif}\!\Bigl[\tfrac{1}{3},\tfrac{2}{3}\Bigr]^2,
\sigma_i\sim\mathrm{Unif}\!\Bigl[\tfrac{1}{30},\tfrac{4}{30}\Bigr].
\)
The diffusion coefficient is modeled by a harmonic function of the form
\begin{equation}\label{eq:2d_diffusion}
D(\boldsymbol{x}) = 1 + \alpha\bigl[(x_1-p_1)^2 + (x_2-p_2)^2\bigr],
\end{equation}
with $\alpha\sim\mathrm{Unif}[0,2]$ and $(p_1,p_2)\sim\mathrm{Unif}\!\bigl[\tfrac{1}{3},\tfrac{2}{3}\bigr]^2$.

The computational domain is restricted to $\Omega = [0,1]^2$ with absorbing boundary conditions. The spatial resolution is $\Delta x = 1/60$ and temporal observation times are independently sampled from $\nu_t=\mathrm{Unif}[0,1]$. For each realization of $(\boldsymbol{\mu},D)$, we solve \eqref{eq:2d_fpe_nonuniform} using the same numerical scheme described in Sec.~\ref{subsec:1d_fpe_mog}, advancing the solution on a refined temporal mesh to ensure stability and accuracy.  Following this procedure, we obtain a dataset comprising $3200$ training samples and $800$ testing samples.

As illustrated in Figs.~\ref{fig:4}(c)–(j), and consistent with the one-dimensional experiments, the NIO model fails to accurately reconstruct the underlying potential. In contrast, both \textsc{FNO--NIO} and \textsc{BlinDNO} recover the unknown potential with significantly higher fidelity, with \textsc{BlinDNO} achieving the most accurate reconstructions among the three. For the diffusion term, all models deliver comparably reliable reconstructions. The recovered probabilistic dynamics shown in Fig.~\ref{fig:4}(b), together with the quantitative error metrics in Table~\ref{tab:1}, further demonstrate the superior performance of \textsc{BlinDNO}.

 \begin{figure}[!htbp]
  \centering
\includegraphics[width=\textwidth]{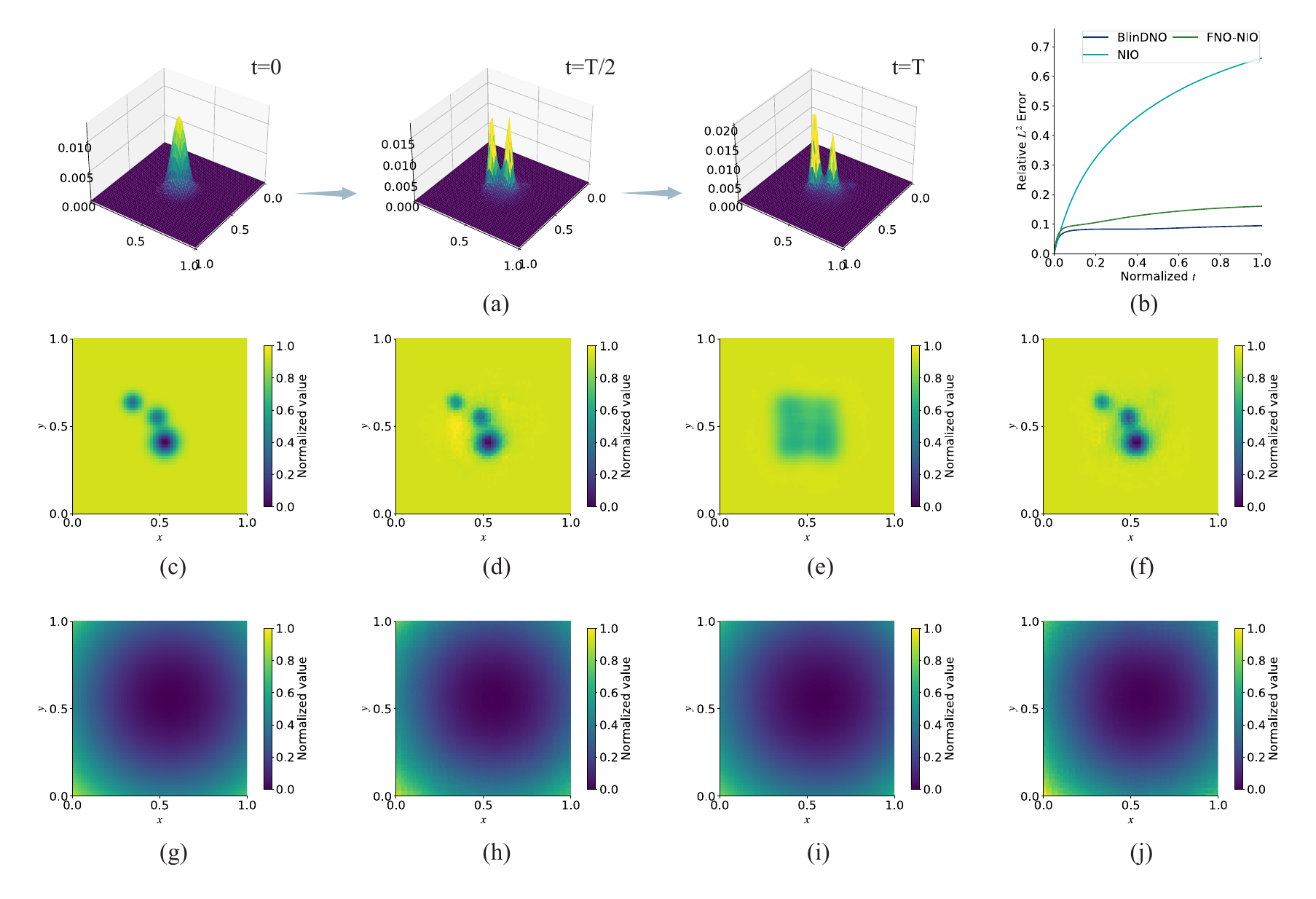}
  \caption{Inversion results for the 2D Fokker–Planck equation with a non-uniform diffusion term. 
(a) Temporal evolution of the ground-truth density, $\rho_{\mathrm{GT}}(x,t)$. 
(b) Relative $L_{2}$ error at each time step between the simulated density and the ground truth. 
(c) Ground-truth potential, $U(x)$. 
(d)–(f) Reconstructed potential profiles, $U(x)$, obtained using \textsc{BlinDNO}, NIO, and FNO–NIO. 
(g) Ground-truth diffusion term, $D(x)$. 
(h)–(j) Reconstructed diffusion terms, $D(x)$, obtained using \textsc{BlinDNO}, NIO, and FNO–NIO.
}
  \label{fig:4}
\end{figure}
\subsection{Example 4: 2D Fokker Planck Equation with Non-conservative Force Field}
\label{subsec:2d_fpe_noncons}
In many practical scenarios the drift term is not derived from a potential, leading to intrinsically non--equilibrium dynamics. Here we investigate a setting with a non--conservative force field exhibiting both rotational structure and radial dissipation.  The probability density $p(\boldsymbol{x},t)$ evolves according to
\begin{equation}\label{eq:2d_fpe_noncons}
\partial_t p
= -\nabla\cdot(\boldsymbol{\mu}\,p) + D\,\Delta p,
\qquad (\boldsymbol{x},t)\in[0,1]^2\times[0,1],
\end{equation}
where $D>0$ is a known constant. The drift field $\boldsymbol{\mu}(\boldsymbol{x})=(F_x,F_y)$ is parameterized in polar coordinates $(r,\phi)$ through tangential and radial components $F_\phi$ and $F_r$, respectively:
\begin{equation}\label{eq:2d_forcefield}
\begin{aligned}
F_\phi(r) &= \gamma_\phi\,\frac{r}{L}\,\exp\!\left(-\frac{b\,r}{L}\right), \\
F_r(r)    &= \gamma_r\left(1-\frac{r}{L}\right)\exp\!\left(-\frac{d\,r}{L}\right),
\end{aligned}
\end{equation}
with $r=\sqrt{x_1^2+x_2^2}$ and $\phi=\arctan2(x_2,x_1)$. The Cartesian components are obtained via the standard polar-to-Cartesian transformation:
\begin{equation}\label{eq:force_cartesian}
\begin{aligned}
F_x &= -\sin\phi\,F_\phi(r) + \cos\phi\,F_r(r), \\
F_y &= \phantom{-}\cos\phi\,F_\phi(r) + \sin\phi\,F_r(r).
\end{aligned}
\end{equation}
The parameters are sampled as
\(
L\sim\mathrm{Unif}[0.25,0.75],
\gamma_\phi,\gamma_r\sim\mathrm{Unif}[0.5,2],
b,d\sim\mathrm{Unif}[0.5,2],
\)
where $\gamma_\phi$ and $\gamma_r$ control the strength of the tangential and radial components, while $b$ and $d$ determine their exponential decay rates.  We adopt a setting analogous to that of the previous example, but employ a finer spatial discretization with grid spacing $\Delta x = 1/80$.

In this setting, all three models are able to reconstruct the underlying force field successfully. As illustrated by the example in Fig.~\ref{fig:5}, the \textsc{FNO--NIO} model tends to underestimate the overall magnitude of the force field, whereas \textsc{BlinDNO} provides more accurate predictions near the center compared to \textsc{NIO}. Consequently, the trajectories generated using the \textsc{BlinDNO}-recovered field remain closer to the true dynamics. The statistical results reported in Table~\ref{tab:1} further corroborate the superior performance of \textsc{BlinDNO}.
 \begin{figure}[!htbp]
  \centering
\includegraphics[width=\textwidth]{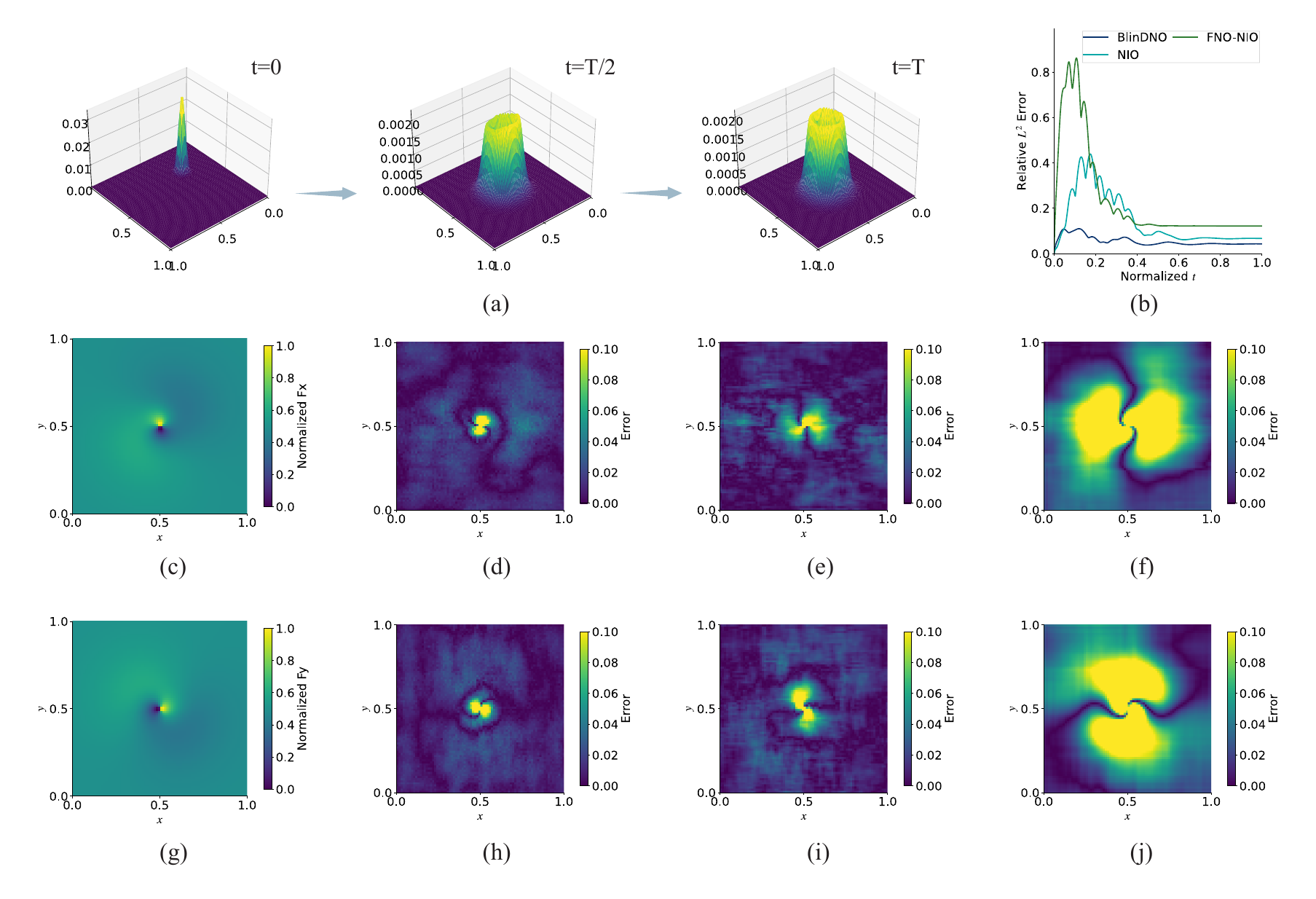}
  \caption{Inversion results for the 2D Fokker–Planck equation with a non-conservative force field. 
(a) Temporal evolution of the ground-truth density, $\rho_{\mathrm{GT}}(x,t)$. 
(b) Relative $L_{2}$ error at each time step between the simulated density and the ground truth. 
(c) $x$-component of the ground-truth force field, $F_x(x)$. 
(d)–(f) Error of the reconstructed $x$-component of the force field, $F_x(x)$, obtained using \textsc{BlinDNO}, NIO, and FNO–NIO. 
(g) $y$-component of the ground-truth force field, $F_y(x)$. 
(h)–(j) Error of the reconstructed $y$-component of the force field, $F_y(x)$, obtained using \textsc{BlinDNO}, NIO, and FNO–NIO.
}
  \label{fig:5}
\end{figure}

\begin{table}[t]
\centering
\label{tab:1}
\begin{tabular}{lcccccc}
\toprule
\multirow{2}{*}{Model} & \multicolumn{2}{c}{1D FPE} & \multicolumn{2}{c}{1D Schrödinger} & \multicolumn{2}{c}{1D GPE} \\
 & $\boldsymbol{E}_{\theta}\downarrow$ & $\boldsymbol{E}_{\boldsymbol{\rho}}\downarrow$ 
 & $\boldsymbol{E}_{\theta}\downarrow$ & $\boldsymbol{E}_{\boldsymbol{\rho}}\downarrow$
 & $\boldsymbol{E}_{\theta}\downarrow$ & $\boldsymbol{E}_{\boldsymbol{\rho}}\downarrow$ \\
\midrule
NIO          & 14.1 & 25.9 & 4.2 & 26.1 & 4.1 & 15.0 \\
FNO-NIO      & 17.1 & 10.9 & 4.8 & 20.5 & 2.8 & 7.3 \\
\textsc{BlinDNO}      & \textbf{12.1} & \textbf{10.1} & \textbf{3.8} & \textbf{18.2} & \textbf{2.7} & \textbf{6.4} \\
\midrule
\multirow{2}{*}{} & \multicolumn{2}{c}{2D Diffusion} & \multicolumn{2}{c}{2D Force} & \multicolumn{2}{c}{} \\
 & $\boldsymbol{E}_{\theta}\downarrow$ & $\boldsymbol{E}_{\boldsymbol{\rho}}\downarrow$
 & $\boldsymbol{E}_{\theta}\downarrow$ & $\boldsymbol{E}_{\boldsymbol{\rho}}\downarrow$ & & \\
\midrule
NIO          & 5.8 & 41.9 & 4.9 & 12.7 \\
FNO-NIO      & 2.8 & 13.6 & 6.7 & 10.8 \\
\textsc{BlinDNO}      & \textbf{1.9} & \textbf{10.3} & \textbf{4.4} & \textbf{8.7} \\
\bottomrule
\end{tabular}
\caption{Comparison of  $E_{\textbf{param}}$ and $E_{\boldsymbol{\rho}}$ of NIO, FNO-NIO and \textsc{BlinDNO} on different tasks. 
Lower is better $\downarrow$. Best results are in \textbf{bold}.}
\end{table}
\subsection{Example 5: 3D Dynamics of Protein Folding}
\label{subsec:3d_protein_folding}
We conclude with a high-dimensional inverse problem arising from a realistic 3D cryo-EM setting. Specifically, we study the folding dynamics of the heterodimeric ABC exporter TmrAB, initially captured in an inward-facing conformation under turnover conditions, as reported in the EMDB entry EMD--4774\cite{hofmann2019conformation}. In our model, the left polypeptide chain remains fixed, whereas the right chain undergoes rigid-body rotation about a prescribed axis passing through a point above the protein complex.

The probability density $p(\mathbf{x},t)$ evolves according to the 3D FPE where the drift field is induced by the rotational velocity
\[
\boldsymbol{\mu}(\mathbf{x})
= \omega\,(\mathbf{u}\times\mathbf{r})\,\mathbb{I}_{\Omega_{\mathrm{right}}},
\]
with $\mathbf{u}$ denoting the unit rotation axis, $\omega$ the angular velocity, and $\mathbf{r}$ the position vector. For each simulation, the rotation axis $\mathbf{u}$ is drawn uniformly from the unit sphere, and the angular velocity $\omega$ is sampled from $\mathrm{Unif}[0.5,2.0]$. The velocity is set to zero inside $\Omega_{\mathrm{left}}$, ensuring that the left chain remains stationary. The computational domain is $\Omega=[0,10]^3$, and the diffusion coefficient is fixed at $D = 10^{-4}$. The initial density $\rho(\mathbf{x},0)$ is obtained from the cryo-EM map of EMD--4774 and downsampled by a factor of eight, yielding a grid with spatial resolution $\Delta x = 10/64$.

Spatial derivatives for advection and diffusion are approximated using fourth-order central finite differences, and time integration is performed using a third-order SSP–RK scheme. Reflective boundary conditions are imposed to ensure mass conservation, and negative values introduced by numerical errors are truncated and renormalized. The final time is $T_{\mathrm{final}} = 2.0$ with a maximum time step $\Delta t_{\max} = 0.002$. This procedure produces a dataset of $1600$ training samples and $400$ testing samples. This experiment constitutes a demanding benchmark due to its high spatial dimensionality, the nontrivial rotational dynamics, and the complex initial density derived directly from experimental cryo-EM measurements.

After training, \textsc{BlinDNO} accurately reconstructs the high-dimensional velocity field, successfully identifying both the direction of the rotation axis and the corresponding angular velocity (see Figs.~\ref{fig:6}(g)–(l)). Using the reconstructed velocity field, we simulate the resulting protein folding dynamics. The predicted trajectories (Figs.~\ref{fig:6}(d)–(f)) closely track the ground-truth evolution observed in Figs.~\ref{fig:6}(a)–(c), demonstrating that \textsc{BlinDNO} is capable of capturing the high-dimensional dynamics governing by complex 3D conformational transition.

 \begin{figure}[!htbp]
  \centering
\includegraphics[width=\textwidth]{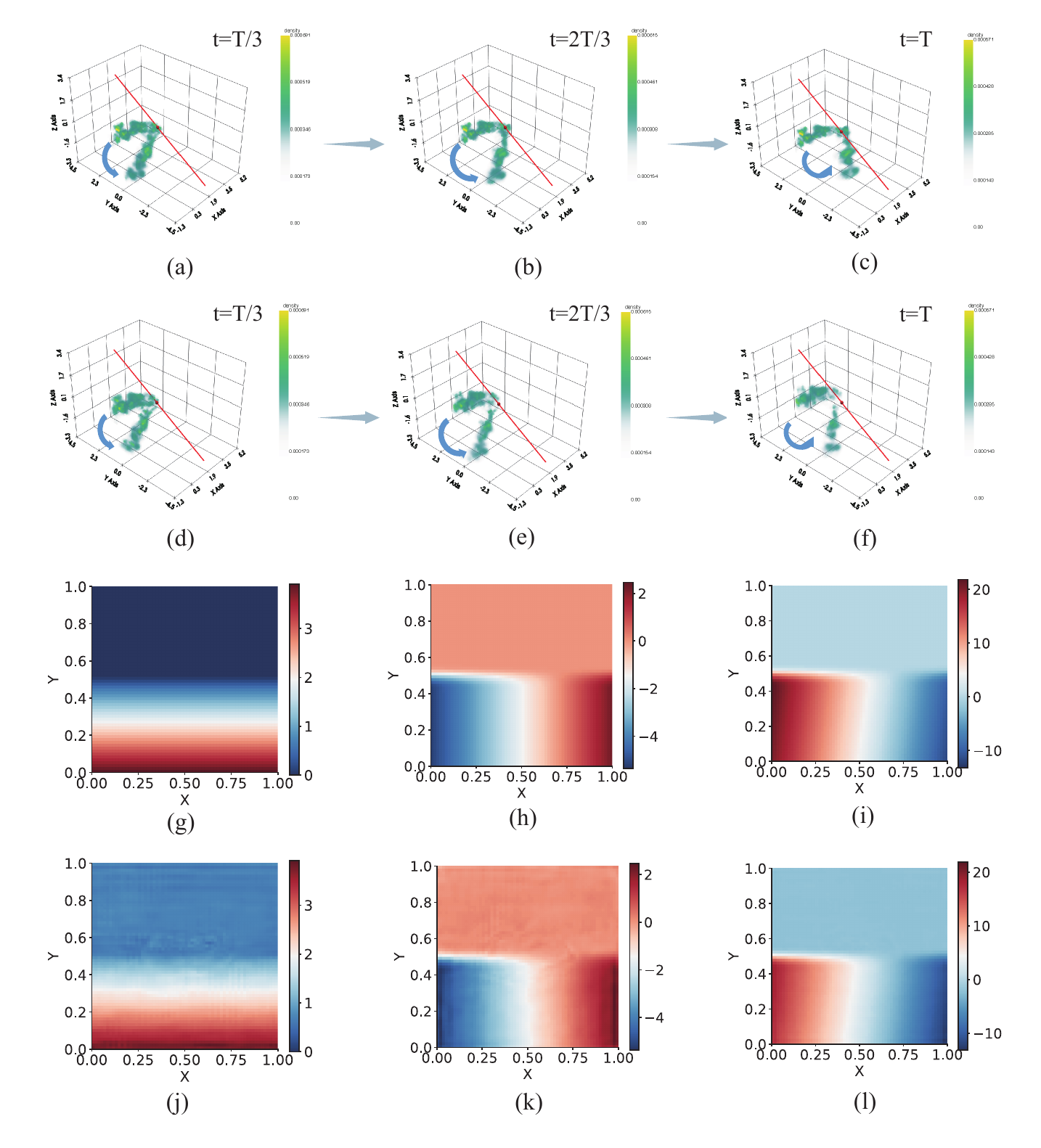}
  \caption{Inversion results for the 3D protein folding dynamics. 
(a)-(c) Temporal evolution of the ground-truth density, $\rho_{\mathrm{GT}}(x,t)$ at $t = \frac{T}{3},\frac{2T}{3},T$. 
(d)-(f) Temporal evolution of the reconstructed density, $\rho_{\mathrm{GT}}(x,t)$ at the same time points. 
(g)–(i) Ground-truth velocity components $\boldsymbol{\mu}_x(x,y,0)$,$\boldsymbol{\mu}_y(x,y,0)$,$\boldsymbol{\mu}_z(x,y,0)$. 
(j)–(l) Reconstructed velocity components $\boldsymbol{\mu}_x(x,y,0)$,$\boldsymbol{\mu}_y(x,y,0)$,$\boldsymbol{\mu}_z(x,y,0)$. 
}
  \label{fig:6}
\end{figure}

\section{Conclusions and discussions}\label{sec:conclusion}
This work extends the classical inverse problem of parameter reconstruction in probability evolution dynamics to a time-label-free setting. We presented a unified mathematical formulation for recovering unknown drift and diffusion terms in SDE-driven dynamics, as well as external potentials in quantum mechanical systems, using only unordered observations of evolving probability densities. To address the intrinsic challenges posed by the loss of temporal information, we introduced \textsc{BlinDNO}, an efficient distribution-to-functions neural operator designed to directly approximate the inverse operator associated with these problems.

The proposed framework leverages a permutation-invariant architecture tailored for unordered density snapshots. Central to this design is an attention-based UNet mixer coupled with a high-order Janossy pooling mechanism, enabling the extraction of compact yet expressive distributional features. These features are subsequently mapped to functional outputs via a lightweight Fourier neural operator, resulting in a scalable and robust distribution-to-functions pipeline. Through a sequence of experiments---ranging from one-dimensional SDE and quantum dynamics to two-dimensional nonequilibrium systems and a three-dimensional synthetic cryo-EM protein folding model---we demonstrated that \textsc{BlinDNO} consistently outperforms existing baselines and achieves accurate reconstructions across diverse dynamical regimes and spatial dimensionalities, despite operating in a time-label-free setting.

There are several important directions for future work. First, it is of theoretical interest to further analyze the properties of the inverse operators considered here and to establish approximation guarantees for the proposed distribution-to-functions neural operator. Such results may provide additional insight into the expressiveness and stability of \textsc{BlinDNO}. Second, while data-driven operator learning methods often exhibit limited generalization to out-of-distribution regimes, one promising avenue is to integrate \textsc{BlinDNO} into PDE-constrained optimization frameworks. In this setting, \textsc{BlinDNO} could serve as a fast initializer, significantly accelerating the solution of large-scale inverse problems and enabling applications to real cryo-EM experimental datasets. These extensions will be explored in future work.


\bibliographystyle{siamplain}
\bibliography{references}
\end{document}